\colorlet{shadecolor}{yellow}
\newtheorem{lemm}{Lemma}
\newtheorem{prop}{Proposition}
\newtheorem{definition}{Definition}
\renewcommand{\algorithmicrequire}{\textbf{Input:}}  
\renewcommand{\algorithmicensure}{\textbf{Output:}}
\definecolor{limegreen}{rgb}{0.2, 0.8, 0.2}
\definecolor{forestgreen}{rgb}{0.13, 0.55, 0.13}
\definecolor{greenhtml}{rgb}{0.0, 0.5, 0.0}
\definecolor{skyblue}{rgb}{0.53, 0.81, 0.92}
\definecolor{lightgray}{rgb}{0.83, 0.83, 0.83}
\definecolor{gray}{rgb}{0.75, 0.75, 0.75}
\definecolor{darkgray}{rgb}{0.66, 0.66, 0.66}
\colorlet{shadecolor}{yellow!255}
\newcommand\copyrighttext{%
  \footnotesize This work has been submitted to the IEEE for possible publication. Copyright may be transferred without notice, after which this version may no longer be accessible.}
\newcommand\copyrightnotice{%
\begin{tikzpicture}[remember picture,overlay]
\node[anchor=south,yshift=8pt] at (current page.south) {\fbox{\parbox{\dimexpr\textwidth-\fboxsep-\fboxrule\relax}{\copyrighttext}}};
\end{tikzpicture}%
}
\begin{document}
%==================================================
% \title, authors, abstract, and keywords
%==================================================
\title{Generalized Flexible Hybrid Cable-Driven Robot (HCDR): Modeling, Control, and Analysis}

\author{
	\vskip 1em
    {Ronghuai Qi, Amir Khajepour, and William W. Melek}
\thanks{
{%Manuscript received July 22, 2019, revised {\color{black}March 10, 2020}. 
This work was supported in part by the Natural Sciences and Engineering Research Council of Canada.

R. Qi, A. Khajepour, and W. W. Melek are with the Department of Mechanical and Mechatronics Engineering, University of Waterloo, Waterloo, ON N2L 3G1, Canada (e-mail: ronghuai.qi@uwaterloo.ca; a.khajepour@uwaterloo.ca; william.melek@uwaterloo.ca).
		}
	}
}

\maketitle
{\color{black}
\begin{abstract}
This paper presents a generalized flexible Hybrid Cable-Driven Robot (HCDR). For the proposed HCDR, the derivation of the equations of motion and proof provide a very effective way to find items for generalized system modeling. The proposed dynamic modeling approach avoids the drawback of traditional methods and can be easily extended to other types of hybrid robots, such as a robot arm mounted on an aircraft platform.

Additionally, another goal of this paper is to develop integrated control systems to reduce vibrations and improve the accuracy and performance of the HCDR. To achieve this goal, redundancy resolution, stiffness optimization, and control strategies are studied. The proposed optimization problem and algorithm address the limitations of existing stiffness optimization approaches. Three types of control architecture are proposed, and their performances (i.e., reducing undesirable vibrations and trajectory tracking errors, especially for the end-effector) are evaluated using several well-designed case studies. Results show that the fully integrated control strategy can improve the tracking performance of the end-effector significantly.
\end{abstract}

\begin{IEEEkeywords}
Hybrid Cable-Driven Robot (HCDR), modeling, optimization, vibration control, trajectory tracking.
\end{IEEEkeywords}
}

\copyrightnotice

%==================================================
% \section{Introduction}
%==================================================
\section{Introduction}
\IEEEPARstart{S}{erial} manipulators are one of the most common types of industrial robots, which consist of a base, a series of links connected by motor-driven joints, and an end-effector. Usually, they have 6 degrees of freedom (DOFs) and offer high positioning accuracy. They are commonly used in industrial applications; however, they have some key limitations, such as high motion inertia and limited workspace envelope \cite{Wei2015}. Cable-driven parallel robots (CDPRs) are another important type of industrial robots. Their configurations usually bear resemblance to parallel manipulators. For these robots, rigid links are replaced with cables. This reduces the robot weight since cables are almost massless. It also eliminates the use of revolute joints. These features allow the mobile platform to reach high motion accelerations in large workspaces. However, they are not without some drawbacks, such as their low accuracy, high vibration, etc., all of which limit their applications \cite{Oh2005}. To overcome the aforementioned shortages of serial and cable-driven parallel robots as well as combine their advantages, one approach is to combine these two types of robots to create a hybrid cable-driven robot (HCDR), i.e., a hybrid structure of CDPR(s) and serial robot(s).

The literature shows that existing research and applications prefer to affix a robot arm upside down to the bottom of a CDPR\textrm{'}s platform \cite{T.Arai1999,H.Osumi2000,M.Bamdad2015,M.Gouttefarde2017,J.S.Albus1989,J.S.Albus2003,GmbH,SKYCAM_LLC} or mainly control the cable robot while treating the serial robot as a manipulation tool or an end-effector rather than a whole system \cite{J.S.Albus1989,J.S.Albus2003,GmbH,SKYCAM_LLC}. When a serial robot is mounted on a mobile platform, they constitute a new coupled system. Only controlling the mobile platform (i.e., treating the serial robot as a manipulation tool) or the serial robot may not guarantee the position accuracy of the end-effector. For applications that use such a system, the main goal is to control the end-effector of the serial robot (e.g., its trajectories and vibrations) to effectively accomplish tasks such as pick-and-place. Another major challenge in the utilization of these systems is maintaining the appropriate cable tensions and stiffness for the robot. This requires the development of kinematic and dynamic models, stiffness optimization, and controllers for HCDRs.

Some research has been carried out to solve these problems: for kinematic and dynamic modeling, existing research mainly focuses on rigid serial robots \cite{P.R.Pagilla2004}, rigid/flexible parallel robots \cite{Lau2013,N.Mostashiri2018, C.Viegas2017, Khajepour2015, Taghirad2011,H.Jamshidifar2017,Z.Mu2018,Otis2009,M.Chen2018}, and wheeled rigid mobile vehicles carrying a rigid/flexible joint arm \cite{Lin2001,S.A.Marinovic2017}. In \cite{Lau2013}, a multilink manipulator model was developed, but this model applied to each link driven by cables. To solve the redundancy and stiffness optimization problems, some useful methods were studied, such as minimum 2-norm of cable tensions{~}\cite{Khajepour2015,Mendez2014} and stiffness maximization in the softest direction \cite{H.Jamshidifar2017, H.Jamshidifar2018}. However, their research focused on planar CDPRs, and maximizing robots\textrm{'} overall stiffness by using these approaches was not always guaranteed. Since the use of flexible cables reduces the overall stiffness of cable-driven robots, vibration control becomes a serious problem. {\color{black}Meanwhile, the moving robot arm also generates reaction forces to the mobile platform, resulting in the mobile platform vibrating. Hence, it is challenging to achieve the goal of minimizing the vibrations and increasing the position accuracy of the end-effector.} To the best of my knowledge, limited studies address the modeling and control problems of flexible HCDRs. Especially, when the redundancy and stiffness optimization problems are introduced, the control of trajectories and vibrations becomes more challenging. {\color{black}Researchers in{~}\cite{Tecnalia} showed a CDPR carrying a robot arm for painting large surfaces, but vibrations were obvious and large based on their demonstration.}

This paper is motivated by the need to solve the aforementioned problems for CDPRs with serial robotic arms in order to increase their accuracy and adoption in industrial or other potential applications (e.g., rehabilitation). To implement this motivation, this paper focuses on a generalized flexible HCDR (shown in \autoref{fig:J1_GeneralizedHCDR}), including modeling, control, and performance analysis. {\color{black}The novelty and main contributions of this paper are as follows:}

% Version 0, 07/11/2019 same as the thesis
{\color{black}
\begin{enumerate}
\item {The derivation of the equations of motion and proof provide a very effective way to find items for generalized system modeling. Meanwhile, the proposed dynamic modeling approach avoids the drawback of traditional methods (e.g., \cite{Denavit1955}), and can be easily extended to other types of hybrid robots by changing the proposed structure matrix based on their desired configurations, e.g., robot arm(s) mounted on an aircraft platform{~}{\cite{Tardella2016,PRODRONE2016}}.}

\item {Three types of control architecture are proposed to reduce vibrations and improve the accuracy of the HCDR. Their performances are also evaluated using several well-designed case studies.}

\item {The proposed optimization problem and algorithm address the limitations of existing stiffness optimization approaches in{~}{\cite{Khajepour2015,Mendez2014,H.Jamshidifar2017, H.Jamshidifar2018}}. Meanwhile, they can be applied to not only CDPRs but also HCDRs.}
\end{enumerate}}

Additionally, the growth of automated warehousing solutions has been fueled by the e-commerce explosion in recent years~\cite{Hamedthesis2018}. By 2024, the market of global automated material handling equipment is predicted to no less than US\$ 50.0 Billion with a Compound Annual Growth Rate (CAGR) of $8\%$~\cite{Heraldkeeper2019,MarketEngine2018}. These increase of automated warehousing applications offers a unique opportunity for the development of cable robotics that is superior in, especially for~{CDPRs and HCDRs}. The proposed technique in this paper provides a valid solution for the development of~{CDPRs and HCDRs}.

In this paper, generalized system modeling is introduced in \autoref{sec:J1_GeneralizedSystemModeling}. In \autoref{sec:J1_HCDPRExampleforAnalysis}, a HCDR example is selected by applying the modeling method in \autoref{sec:J1_GeneralizedSystemModeling}. Then, In \autoref{sec:J1_VibCtrlDesign}, vibration control design based on this HCDR example is implemented. Control performance and evaluation using case studies are presented in \autoref{sec:J1_ControlPerformanceandEva}. Finally, in \autoref{sec:J1_Conclusions}, the contributions of this paper are summarized.

%==================================================
% \section{Generalized System Modeling}
%==================================================
\section{Generalized System Modeling}\label{sec:J1_GeneralizedSystemModeling}
In this paper, a generic hybrid cable-driven robot (HCDR) is proposed to overcome the shortcomings of CDPRs and serial robots as well as aggregate their advantages. {\color{black}The HCDR is defined as follows:
\begin{definition}\label{definition:hybrid}
A hybrid cable-driven robot (HCDR) is a robot that is composed of two or more heterogeneous mechatronic components, where at least one component is CDPR.
\end{definition}

With reference to Definition~\autoref{definition:hybrid}, let us consider} a generalized $(n+m)$-DOF HCDR in three-dimensional (3D) space (shown in \autoref{fig:J1_GeneralizedHCDR}) with an $n$-DOF ($\{n \in {\mathbb{N}}:n \le 6\}$) cable-driven parallel robot (mobile platform) and an $m$-DOF ($m \in {\mathbb{N}}$) robot arm, where the robot arm is mounted on the mobile platform and moves with it. To simplify modeling, all the driven cables are assumed massless, straight, and stretchable.

As a coupled system, modeling is much harder by comparison to just parallel robot or serial robot arm, especially when flexible parts are introduced (e.g., flexible driven cables). To develop the model of the hybrid system, first, we derive the equations of motion of the $n$-DOF CDPR (in \autoref{subsec:J1_CDPRDyn}); then, we will use some results in \autoref{subsec:J1_CDPRDyn} to derive the equations of motion of the $(n+m)$-DOF HCDR (in \autoref{subsec:J1_HCDPRDyn}).
\begin{figure}[h]\centering
	%\vspace{-0.5cm}
	\includegraphics[width=8.6cm]{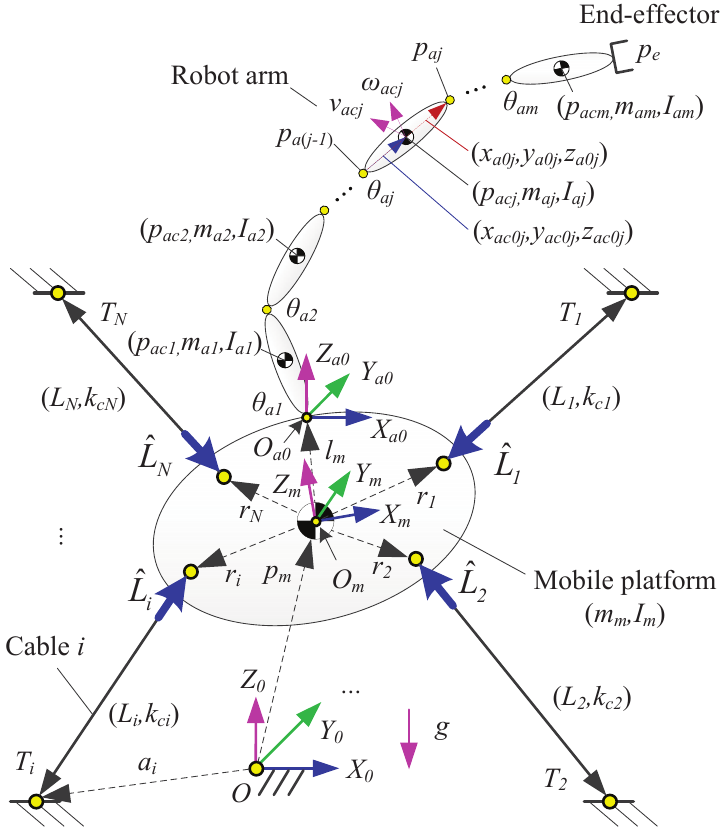}
	\vspace{-0.2cm}
	\caption{Configuration of a generalized $(n+m)$-DOF HCDR with an $n$-DOF CDPR and an $m$-DOF robot arm, where the robot arm is mounted on the CDPR.}\label{fig:J1_GeneralizedHCDR}
%\vspace{-0.72cm}
\vspace{-0.2cm}
\end{figure}

\subsection{Equations of Motion of the CDPR}\label{subsec:J1_CDPRDyn}
In \autoref{fig:J1_GeneralizedHCDR}, the inertial coordinate frame $\{O\}$ is assumed fixed on the base/ground. Coordinate frame $\{O_m\}$ is located at the center of mass (COM) of the mobile platform. By assuming the Euler angles ${[{\alpha _m},{\beta _m},{\gamma _m}]^T} \in {\mathbb{R}^3}$ (the orientations of the mobile platform about $X$-, $Y$-, and $Z$-axes, respectively), the rotation matrix (e.g., $X\rightarrow Y'\rightarrow Z''$ order as below) is computed as
\begin{align}
R_g^m = R_x({\alpha _m})R_{y'}({\beta _m})R_{z''}({\gamma _m}) \in {\mathcal{SO}(3)}. \label{eq:J1_1}
\end{align}

Then, the cable-length vector is calculated as
\begin{align}
{{{\vec L}}_i} = &{[{p_{mx}},{p_{my}},{p_{mz}}]^T} + R_g^m{[r_{ix},r_{iy},r_{iz}]^T} \nonumber \\ 
&{- {[a_{ix},a_{iy},a_{iz}]^T}}
,\; \{\forall \; i \in{\mathbb{N}} :1 \le i \le N\}
\label{eq:J1_2}
\end{align}
where ${{{\vec L}}_i} \in {\mathbb{R}^3}$ denotes the position vector from the $i$th cable anchor point on the robot static frame to the $i$th cable anchor point on the mobile platform; ${{p_m}:=[{p_{mx}},{p_{my}},{p_{mz}}]^T} \in {\mathbb{R}^3}$ represents the position vector of the coordinate frame $\{O_m\}$ with respect to the coordinate frame $\{O\}$; ${{r_i}:=[r_{ix},r_{iy},r_{iz}]^T} \in {\mathbb{R}^3}$ denotes the position vector of the $i$th cable anchor point on the mobile platform with respect to the body-fixed frame $\{O_m\}$; ${{a_i}:=[a_{ix},a_{iy},a_{iz}]^T} \in {\mathbb{R}^3}$ represents the position vector of the $i$th cable anchor point on the robot static frame with respect to the coordinate frame $\{O\}$; and $N$ is the total number of cables. Then, the $i$th cable length ${L_i} \in \mathbb{R}$ is computed as
\begin{align}
L_i = &{\lVert {{[{p_{mx}},{p_{my}},{p_{mz}}]}^T} + R_g^m{{[{r_{ix}},{r_{iy}},{r_{iz}}]}^T}}\nonumber \\
&{- {{[{a_{ix}},{a_{iy}},{a_{iz}}]}^T}} \rVert.
\label{eq:J1_3}
\end{align}

In addition, the derivative of \eqref{eq:J1_2} is rearranged as
\begin{align}
{{\dot L}_i} = {{\hat L}}_i^T{{{v}}_m} + {(R_g^m{[r_{ix},r_{iy},r_{iz}]^T} \times {{{{\hat L}}}_i})^T}{{\color{black}R_g^m}{{\omega }}_m}
\label{eq:J1_4}
\end{align}
where ${\dot L_i} \in \mathbb{R}^3$ denotes the $i$th cable length velocity, ${{{\hat L}}_i:=\frac{{\vec L}_i}{L_i}= {[{\hat L_{ix}},{\hat L_{iy}},{\hat L_{iz}}]^T}} \in {\mathbb{R}^3}$ represents the unit cable position vector, and ${{{v}}_m},{{{\omega }}_m} \in {\mathbb{R}^3}$ are the linear velocity and angular velocity of the coordinate frame $\{O_m\}$, respectively.  Then, \eqref{eq:J1_4} can be expanded in matrix form as
\begin{align}
&{[{{{\dot L}_1}},{{{\dot L}_2}}, \cdots ,{{{\dot L}_N}}]^T} = \nonumber\\
&{\underbrace{\begin{bmatrix}
{{\hat L}_1} & \cdots & {{\hat L}_N} \\
{R_g^m\begin{bmatrix} r_{1x} \\ r_{1y} \\ r_{1z} \\ \end{bmatrix}} \times {{\hat L}_1} & \cdots & {R_g^m\begin{bmatrix} r_{Nx} \\ r_{Ny} \\ r_{Nz} \\ \end{bmatrix}} \times {{\hat L}_N}\end{bmatrix}}_{=:A_m}}^T
\begin{bmatrix}
v_m \\
{\color{black}R_g^m}{\omega _m}
\end{bmatrix}
\label{eq:J1_5}
\end{align}
where $A_m$ represents a structure matrix, determined by the position and orientation of the mobile platform. The linear velocity ${{{v}}_m}$ and angular velocity ${{{\omega }}_m}$ are calculated as
\begin{subequations}
\begin{align}
\left[ \begin{array}{cc}
v_m \\
\hline
{\color{black}R_g^m}{\omega _m}
\end{array} \right] = &{({A_m^T})^ + }{[{{{\dot L}_1}},{{{\dot L}_2}}, \cdots ,{{{\dot L}_N}}]^T} \label{eq:J1_6a}\\
= &\left[ \begin{array}{ll}
{[{\dot p_{mx}},{\dot p_{my}},{\dot p_{mz}}]^T}\\
\hline
{\color{black}R_g^m}({[R_g^m]^T}{[{{\dot \alpha }_m},0,0]^T} + {[R_{m\beta }^{m\gamma }]^T}{[R_{m\alpha }^{m\beta }]^T}\\
{[0,{{\dot \beta }_m},0]^T} + {[R_{m\beta }^{m\gamma }]^T}{[0,0,{{\dot \gamma }_m}]^T})
\end{array} \right] \label{eq:J1_6b}
\end{align}
\end{subequations}
where ${(\cdot)^+}$ is the pseudo-inverse of the matrix $(\cdot)$, rotation matrices $R_{m\alpha}^{m\beta}=R_{y'}({\beta _m}) \in {\mathcal{SO}(3)}$ and $R_{m\beta}^{m\gamma}=R_{z''}({\gamma _m}) \in {\mathcal{SO}(3)}$ come from 
\eqref{eq:J1_1}. \eqref{eq:J1_6a} and \eqref{eq:J1_6b} are two expressions to compute ${{{v}}_m}$ and ${{{\omega }}_m}$. ${\dot p_{mx}},{\dot p_{my}},{\dot p_{mz}},{{\dot \alpha }_m},{{\dot \beta }_m}$, and ${{\dot \gamma }_m}$ are the time-derivative of ${p_{mx}},{p_{my}},{p_{mz}},{{\alpha }_m},{{\beta }_m}$, and ${{\gamma }_m}$, respectively.

For the CDPR dynamics, the Newton-Euler equations are used because they can describe the system in \autoref{fig:J1_GeneralizedHCDR} in terms of cable tensions directly. Then, we get
\begin{align}
&\begin{bmatrix}
{{m_m}{{{{\dot v}}}_m}}\\
{{\color{black}R_g^m}{{{I}}_m}{{{{\dot \omega }}}_m} + {\color{black}R_g^m}{{{\omega }}_m} \times ({{{I}}_m}{{{\omega }}_m})}
\end{bmatrix} + \begin{bmatrix}
{{{m_m}{{[0,0,g]}^T}}+{{{{F}}_e}}}\\
{{{{M}}_e}}
\end{bmatrix} \nonumber\\
&= \begin{bmatrix}
{\sum\limits_{i = 1}^N {({T_i}{{{{\hat L}}}_i})} }\\
{\sum\limits_{i = 1}^N {[{T_i}(R_g^m{[r_{ix},r_{iy},r_{iz}]^T} \times {{{{\hat L}}}_i})]} }
\end{bmatrix} = A_mT
\label{eq:J1_7}
\end{align}
where ${T_i} \in \mathbb{R}$ denotes the $i$th cable tension; ${{{\hat L}}_i} \in {\mathbb{R}^3}$ represents the unit vector of $i$th cable position; ${{{F}}_e}, {{{M}}_e} \in {\mathbb{R}^3}$ are the external forces and moments  (e.g., the interaction forces and torques from the mounted robot arm affecting the mobile platform) applied to the coordinate frame $\{O_m\}$; ${m_m} \in \mathbb{R}$ is the mass of the mobile platform; ${{{I}}_m} \in {\mathbb{R}^{3 \times 3}}$ denotes the moment of inertia of the mobile platform; ${{{v}}_m},{{{\dot v}}_m},{{{\omega }}_m},{{{\dot \omega }}_m} \in {\mathbb{R}^3}$ represent the linear velocity, linear acceleration, angular velocity, and angular acceleration the mobile platform, respectively; and $g$ is the gravitational acceleration.

Suppose the cable stiffness matrix is ${{{K}}_c} = {\color{black}{\rm{diag}}\left[{{k_{c1}},{k_{c2}}, \cdots ,{k_{cN}}} \right]} \in {\mathbb{R}^{N \times N}}$, where ${k_{ci}}=\frac{EA_i}{L_{0i}}$ represents the $i$th cable stiffness, 
$EA_i$ is the product of the modulus of elasticity and cross-sectional area  of the $i$th cable, and $L_{0i}$ denotes $i$th unstretched cable length. Then, the cable tension vector is calculated as
\begin{align}
T = {K_c}\left(L - L_0 \right)
\label{eq:J1_8}
\end{align}
where ${{T}} \in {\mathbb{R}^N}$ denotes the cable tension vector, ${{L}} \in {\mathbb{R}^N}$ represents the cable length vector, and ${{{L}}_0} \in {\mathbb{R}^N}$ denotes the vector of unstretched cable lengths. The directions of $T$ are shown in \autoref{fig:J1_GeneralizedHCDR}.

Finally, by considering a vector of unknown bounded disturbances ${\tau _{md}}$, \eqref{eq:J1_7} and \eqref{eq:J1_8} can be described as

\begin{align}
&\underbrace {\begin{bmatrix}
{{m_m}{I}}&\textbf{0}\\
\textbf{0}&{{\color{black}R_g^m}{{{I}}_m}}
\end{bmatrix}}_{=:{M_m}({q_m})}\underbrace {\begin{bmatrix}
{{{{{\dot v}}}_m}}\\
{{{{{\dot \omega }}}_m}}
\end{bmatrix}}_{=:{{\ddot q}_m}} + \underbrace {\begin{bmatrix}
\textbf{0}\\
{{\color{black}R_g^m}[{{{\omega }}_m}]{{{I}}_m}}
\end{bmatrix}}_{=:{C_m}({q_m},{{\dot q}_m})}\underbrace {\begin{bmatrix}
{{{{v}}_m}}\\
{{{{\omega }}_m}}
\end{bmatrix}}_{{=:{\dot q}_m}} \nonumber\\
&+ \begin{bmatrix}
{\underbrace {{m_m}{{[0,0,g]}^T}}_{=:{G_m}({q_m})} + {{{{F}}_e}}}\\
{{{{M}}_e}}
\end{bmatrix} + {\tau _{md}} = A_mT = :{{{\tau }}_m}
\label{eq:J1_9}
\end{align}
where \begin{small}
$[{{{\omega }}_m}]: = \begin{bmatrix}
0&{ - {{{\omega }}_{mz}}}&{{{{\omega }}_{my}}}\\
{{{{\omega }}_{mz}}}&0&{ - {{{\omega }}_{mx}}}\\
{ - {{{\omega }}_{my}}}&{{{{\omega }}_{mx}}}&0
\end{bmatrix}$
\end{small} and $I\in {\mathbb{R}^{3 \times 3}}$ is the identity matrix. ${M_m}({q_m})$, ${C_m}({q_m},{\dot q_m})$, and $G_m({q_m})$  denote the inertia matrix, Coriolis and centripetal matrix, and gravitational vector, respectively. $q_m$, ${\dot q}_m$, ${\ddot q}_m$, and ${\tau }_m$ represent the vectors of generalized coordinates, velocities, accelerations, and joint forces/torques, respectively. {\color{black}The derivation of equations of motion of the CDPR in this section provides a convenient closed form to simplify HCDR modeling in \autoref{subsec:J1_HCDPRDyn}.}

\subsection{Equations of Motion of the HCDR}
\label{subsec:J1_HCDPRDyn}
For the $(n+m)$-DOF HCDR shown in \autoref{fig:J1_GeneralizedHCDR}, the $j$th ($\{\forall \; j \in{\mathbb{N}} :0 \le j \le m\}$) COM (of the link) position vector $p_{acj}$ and joint position vector $p_{aj}$ are computed as
\begin{small}
\begin{align}
{p_{acj}} = &{[{p_{mx}},{p_{my}},{p_{mz}}]^T} + \underbrace {R_g^mR_m^{a0}}_{=:R_g^{a0}}{[{x_{a00}},{y_{a00}},{z_{a00}}]^T} \nonumber \\ 
 &+ \underbrace {R_g^mR_m^{a0}R_{a0}^{a1}}_{=:R_g^{a1}}{[{x_{a01}},{y_{a01}},{z_{a01}}]^T}  +  \cdots \nonumber \\
&+ \underbrace {R_g^mR_m^{a0}R_{a0}^{a1}R_{a1}^{a2} \cdots R_{a(j - 1)}^{aj}}_{=:R_g^{aj}}{[{x_{ac0j}},{y_{ac0j}},{z_{ac0j}}]^T} \nonumber \\ 
 = &{[{p_{mx}},{p_{my}},{p_{mz}}]^T} + R_g^{a0}{[{x_{a00}},{y_{a00}},{z_{a00}}]^T} \nonumber  + \nonumber \\  
&R_g^{a1}{[{x_{a01}},{y_{a01}},{z_{a01}}]^T}  + \cdots+ R_g^{aj}{[{x_{ac0j}},{y_{ac0j}},{z_{ac0j}}]^T} \nonumber \\
= &{[{p_{mx}},{p_{my}},{p_{mz}}]^T} + R_g^{aj}{[{x_{ac0j}},{y_{ac0j}},{z_{ac0j}}]^T} + \nonumber \\
 &\sum\limits_{k = 0}^j {\left\{ {R_g^{a(j - 1)}{{[{x_{a0(j - 1)}},{y_{a0(j - 1)}},{z_{a0(j - 1)}}]}^T}} \right\}}
 \label{eq:J1_10}
\end{align}
\end{small}
and
\begin{align}
{p_{aj}} = &{[{p_{mx}},{p_{my}},{p_{mz}}]^T} + R_g^mR_m^{a0}{[{x_{a00}},{y_{a00}},{z_{a00}}]^T}\nonumber \\
&+ R_g^mR_m^{a0}R_{a0}^{a1}{[{x_{a01}},{y_{a01}},{z_{a01}}]^T} +  \cdots \nonumber \\
& + R_g^mR_m^{a0}R_{a0}^{a1}R_{a1}^{a2} \cdots R_{a(j - 1)}^{aj}{[{x_{a0j}},{y_{a0j}},{z_{a0j}}]^T}\nonumber \\
 = &{[{p_{mx}},{p_{my}},{p_{mz}}]^T} + R_g^{a0}{[{x_{a00}},{y_{a00}},{z_{a00}}]^T} + \nonumber \\
&R_g^{a1}{[{x_{a01}},{y_{a01}},{z_{a01}}]^T}
+ \cdots  + R_g^{aj}{[{x_{a0j}},{y_{a0j}},{z_{a0j}}]^T}\nonumber \\
 = &{[{p_{mx}},{p_{my}},{p_{mz}}]^T} + \sum\limits_{k = 0}^j {\left\{ {R_g^{aj}{{[{x_{a0j}},{y_{a0j}},{z_{a0j}}]}^T}} \right\}} 
\label{eq:J1_11}
\end{align}
where 
${m_{aj}} \in\mathbb{R}$ is the mass of link $j$ and ${{{I}}_{aj}} \in {\mathbb{R}^{3 \times 3}}$ denotes the moment of inertia of link $j$. $[x_{ac0j},y_{ac0j},z_{ac0j}]^T$ and $[x_{a0j},y_{a0j},z_{a0j}]^T$ are 
body-fixed positions of the $j$th COM and joint, respectively. Also, for the $j$th revolute joint, a rotation matrix from frame $j-1$ to $j$ is defined as
\begin{align}
\begin{small}
R_{a(j - 1)}^{aj} = 
\begin{cases}
{R_x}({\theta _{aj}}), &\text{revolute joint about $X$-axis}\\
{R_y}({\theta _{aj}}), &\text{revolute joint about $Y$-axis}\\
{R_z}({\theta _{aj}}), &\text{revolute joint about $Z$-axis}
\end{cases}
\end{small}
\label{eq:J1_12}
\end{align}
and for the $j$th prismatic joint, the corresponding parameters of the revolute joint are replaced with 
\begin{align}
\begin{small}
\begin{cases}
R_{a(j - 1)}^{aj} = {I_{3 \times 3}}\\
{x_{ac0j}} = {x_{ac0j}} + {\theta _{aj}}, &\text{prismatic joint about $X$-axis}\\
{x_{a0j}} = {x_{a0j}} + {\theta _{aj}}, &\text{prismatic joint about $X$-axis}\\
{y_{ac0j}} = {y_{ac0j}} + {\theta _{aj}}, &\text{prismatic joint about $Y$-axis}\\
{y_{a0j}} = {y_{a0j}} + {\theta _{aj}}, &\text{prismatic joint about $Y$-axis}\\
{z_{ac0j}} = {z_{ac0j}} + {\theta _{aj}}, &\text{prismatic joint about $Z$-axis}\\
{z_{a0j}} = {z_{a0j}} + {\theta _{aj}}, &\text{prismatic joint about $Z$-axis}.
\end{cases}
\end{small}
\label{eq:J1_13}
\end{align}

The linear velocities of the $j$th COM (of the link) and joint are the time-derivative of positions in \eqref{eq:J1_10} and \eqref{eq:J1_11}, respectively. Then, we get
\begin{align}
{v_{acj}} &= {{\dot p}_{acj}} \label{eq:J1_14} 
\\
{v_{aj}} &= {{\dot p}_{aj}}. \label{eq:J1_15}
\end{align}

Additionally, the $j$th angle velocities are computed as
\begin{align}
{\omega _{acj}} = {\omega _{aj}} = {[R_m^{a0}R_{a0}^{aj}]^T}{\omega _m} + \sum\limits_{k = 0}^j {\left\{ {{{[R_{ak}^{aj}]}^T}{{\vec {\dot \theta}}_{ak}}} \right\}}
\label{eq:J1_16}
\end{align}
where ${{\vec {\dot \theta}}_{ak}} \in {\mathbb{R}^3}$ represents the vector of joint velocity about its body-fixed axis. \eqref{eq:J1_16} is a simplified and very useful result for generalized dynamic modeling, e.g., calculating the kinetic energy.

{\color{black}
\begin{lemm} \label{lemm:jth_angle_vel}
Let ${{\vec {\dot \theta}}_{ak}} \in {\mathbb{R}^3}$ be the vector of joint velocity about its body-fixed axis. Then the $j$th angle velocity vector is equal to ${\omega _{acj}} = {[R_m^{a0}R_{a0}^{aj}]^T}{\omega _m} + \sum\limits_{k = 0}^j {\left\{ {{{[R_{ak}^{aj}]}^T}{{\vec {\dot \theta}}_{ak}}} \right\}}$, where $\{\forall \; j,k \in{\mathbb{N}}:0 \le j \le m, \; 0 \le k \le j\}$.
\end{lemm}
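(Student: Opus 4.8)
The statement is the classical angular-velocity composition rule for a serial chain whose base (the arm mount) is itself rotating with the platform, so the plan is to build the total orientation of frame $aj$, differentiate it once, and read the body angular velocity directly. Concretely, I would use the composite rotation $R_g^{aj}=R_g^mR_m^{a0}R_{a0}^{a1}\cdots R_{a(j-1)}^{aj}$ already introduced in \eqref{eq:J1_10}, and exploit the fact that for any rotation $R\in\mathcal{SO}(3)$ mapping a body frame to a reference frame the body angular velocity $\omega$ satisfies $[\omega]=R^T\dot R$, where $[\cdot]$ is the skew-symmetric (cross-product) map as in \eqref{eq:J1_9}. Since $\omega_m$ is the platform angular velocity expressed in $\{O_m\}$, it obeys $\dot R_g^m=R_g^m[\omega_m]$, and each joint factor obeys $\dot R_{a(k-1)}^{ak}=R_{a(k-1)}^{ak}[\vec{\dot \theta}_{ak}]$ by \eqref{eq:J1_12}, with the convention $\vec{\dot \theta}_{a0}=0$ whenever $R_m^{a0}$ is a fixed mount, so that the $k=0$ term is handled uniformly.

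First I would split $R_g^{aj}=R_g^mS_j$ with $S_j:=R_m^{a0}R_{a0}^{aj}$ and apply the product rule, $\dot R_g^{aj}=\dot R_g^mS_j+R_g^m\dot S_j$. Forming $[\omega_{aj}]=(R_g^{aj})^T\dot R_g^{aj}$ then separates cleanly into a platform part $S_j^T[\omega_m]S_j$ and a joint part $S_j^T\dot S_j$, because the $R_g^m$ factors cancel. Next I would differentiate the product $S_j=R_m^{a0}R_{a0}^{a1}\cdots R_{a(j-1)}^{aj}$ term by term; the telescoping product rule, together with the groupings $R_m^{a0}\cdots R_{a(k-1)}^{ak}=R_m^{ak}$ and $R_{ak}^{a(k+1)}\cdots R_{a(j-1)}^{aj}=R_{ak}^{aj}$, yields $\dot S_j=\sum_{k=0}^{j}R_m^{ak}[\vec{\dot \theta}_{ak}]R_{ak}^{aj}$.

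The final step is to collapse both pieces using the equivariance identity $R[a]R^T=[Ra]$, valid for every $R\in\mathcal{SO}(3)$. For the platform part, $S_j^T[\omega_m]S_j=[S_j^T\omega_m]$, whose axial vector is $[R_m^{a0}R_{a0}^{aj}]^T\omega_m$. For the joint part, $S_j^T\dot S_j=\sum_{k=0}^{j}R_{aj}^{ak}[\vec{\dot \theta}_{ak}](R_{aj}^{ak})^T=\sum_{k=0}^{j}[R_{aj}^{ak}\vec{\dot \theta}_{ak}]$, using $R_{aj}^mR_m^{ak}=R_{aj}^{ak}$ and $R_{ak}^{aj}=(R_{aj}^{ak})^T$; its axial vector is $\sum_{k=0}^{j}[R_{ak}^{aj}]^T\vec{\dot \theta}_{ak}$. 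Reading off the axial vector of $[\omega_{aj}]$ and recalling $\omega_{acj}=\omega_{aj}$ then gives exactly the claimed expression. An equally viable alternative is induction on $j$: the base case is immediate from $R_{a0}^{a0}=I$, and the step combines the single-joint relation $\omega_{aj}=R_{aj}^{a(j-1)}\omega_{a(j-1)}+\vec{\dot \theta}_{aj}$ with the composition $R_{aj}^{a(j-1)}[R_{ak}^{a(j-1)}]^T=[R_{ak}^{aj}]^T$ to push the inductive hypothesis forward.

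I expect the only real obstacle to be notational bookkeeping rather than genuine mathematical difficulty: one must stay consistent about the direction encoded by the sub/superscripts of each $R$ (here the superscript is the source frame and the subscript the target), about the body-versus-spatial angular-velocity convention that fixes whether $[\omega]$ sits on the left or right of $\dot R$, and about the treatment of the $k=0$ index (fixed mount versus active joint). The one nontrivial ingredient is the equivariance identity $R[a]R^T=[Ra]$, which is what converts the conjugated skew matrices into clean rotated axial vectors and lets the two sums assemble into the stated closed form.
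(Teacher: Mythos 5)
Your proof is correct, but it takes a genuinely different route from the paper's. The paper starts by \emph{asserting} the additive expansion of $\omega_{acj}$ into one term per elementary rotation in the chain --- three Euler-rate terms $[\cdots]^T[\dot\alpha_m,0,0]^T$, $[\cdots]^T[0,\dot\beta_m,0]^T$, $[\cdots]^T[0,0,\dot\gamma_m]^T$ plus one term per joint rate --- and then does algebraic regrouping: it factors $[R_m^{a0}R_{a0}^{aj}]^T$ out of the three platform terms and recognizes the bracketed remainder as the Euler-angle expression for $\omega_m$ from \eqref{eq:J1_6b}. You instead derive everything from the single definition $[\omega]=R^T\dot R$ applied to the composite rotation $R_g^{aj}=R_g^mS_j$, using the product rule and the equivariance identity $R^T[a]R=[R^Ta]$ to collapse the conjugated skew matrices. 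This buys you two things the paper does not have: (i) the additive decomposition that the paper takes as its unproven starting line is actually \emph{established} in your argument, and (ii) your proof is independent of the Euler parameterization --- you only need $\dot R_g^m=R_g^m[\omega_m]$, so the lemma holds for any representation of the platform orientation, whereas the paper's proof is tied to the specific $X\!\rightarrow\!Y'\!\rightarrow\!Z''$ sequence of \eqref{eq:J1_1}. What the paper's route buys in exchange is an explicit consistency check against \eqref{eq:J1_6b}, which is the form of $\omega_m$ actually used elsewhere in the model. One further point in your favor: the paper's displayed sum runs the joint terms from $\vec{\dot\theta}_{a1}$ to $\vec{\dot\theta}_{aj}$ while the final line is written as $\sum_{k=0}^{j}$; your explicit convention $\vec{\dot\theta}_{a0}=\mathbf{0}$ for the fixed mount resolves this indexing wrinkle cleanly.
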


\begin{proof}The $j$th angle velocity vector ${\omega _{acj}}$ can be derived as follows:
\begin{align*}
&{\omega _{acj}}= \nonumber \\
&{\quad}{[\underbrace {R_g^{m\alpha }R_{m\alpha }^{m\beta }R_{m\beta }^{m\gamma }}_{R_g^m}R_m^{a0}\underbrace {R_{a0}^{a1}R_{a1}^{a2} \cdots R_{a(j - 1)}^{aj}}_{R_{a0}^{aj}}]^T}{[{{\dot \alpha }_m},0,0]^T}\nonumber \\
&{\quad} + {[R_{m\alpha }^{m\beta }R_{m\beta }^{m\gamma }R_m^{a0}R_{a0}^{a1}R_{a1}^{a2} \cdots R_{a(j - 1)}^{aj}]^T}{[0,{{\dot \beta }_m},0]^T}\nonumber \\
&{\quad} + {[R_{m\beta }^{m\gamma }R_m^{a0}R_{a0}^{a1}R_{a1}^{a2} \cdots R_{a(j - 1)}^{aj}]^T}{[0,0,{{\dot \gamma }_m}]^T}+\nonumber \\
&{\quad} {[\underbrace {R_{a0}^{a1}R_{a1}^{a2} \cdots R_{a(j - 1)}^{aj}}_{R_{a0}^{aj}}]^T}{{\vec {\dot \theta}}_{a1}} + {[\underbrace {R_{a1}^{a2} \cdots R_{a(j - 1)}^{aj}}_{R_{a1}^{aj}}]^T}{{\vec {\dot  \theta}}_{a2}}\nonumber \\
&{\quad} + \cdots + {[\underbrace {R_{a(j - 2)}^{a(j - 1)}R_{a(j - 1)}^{aj}}_{R_{a(j - 2)}^{aj}}]^T}{{\vec {\dot \theta}}_{a(j - 1)}} + {[R_{a(j - 1)}^{aj}]^T}{{\vec {\dot \theta}}_{0j}}\nonumber \\
&= {[R_g^mR_m^{a0}R_{a0}^{aj}]^T}{[{{\dot \alpha }_m},0,0]^T} + {[R_{m\alpha }^{m\beta }R_{m\beta }^{m\gamma }R_m^{a0}R_{a0}^{aj}]^T}\nonumber \\
&{\quad} {[0,{{\dot \beta }_m},0]^T} + {[R_{m\beta }^{m\gamma }R_m^{a0}R_{a0}^{aj}]^T}{[0,0,{{\dot \gamma }_m}]^T} + {[R_{a0}^{aj}]^T}{{\vec {\dot \theta}} _{a1}} \nonumber \\
&{\quad}+ \cdots  + {[R_{a(j - 2)}^{aj}]^T}{{\vec {\dot \theta}} _{a(j - 1)}}  + {[R_{a(j - 1)}^{aj}]^T}{{\vec {\dot \theta}}_{aj}}\nonumber \\
&= {[R_m^{a0}R_{a0}^{aj}]^T}{[R_g^m]^T}{[{{\dot \alpha }_m},0,0]^T} + {[R_m^{a0}R_{a0}^{aj}]^T}{[R_{m\beta }^{m\gamma }]^T} \nonumber \\
&{\quad}{[R_{m\alpha }^{m\beta }]^T}{[0,{{\dot \beta }_m},0]^T}
 + {[R_m^{a0}R_{a0}^{aj}]^T}{[R_{m\beta }^{m\gamma }]^T}{[0,0,{{\dot \gamma }_m}]^T} \nonumber \\
&{\quad} + {[R_{a0}^{aj}]^T}{{\vec {\dot \theta}} _{a1}} +  \cdots  + {[R_{a(j - 2)}^{aj}]^T}{{\vec {\dot \theta}} _{a(j - 1)}} + {[R_{a(j - 1)}^{aj}]^T}{{\vec {\dot \theta}} _{aj}}\nonumber \\
&= {[R_m^{a0}R_{a0}^{aj}]^T}\underbrace {\left\{ \begin{array}{l}
{[R_g^m]^T}{[{{\dot \alpha }_m},0,0]^T} + {[R_{m\beta }^{m\gamma }]^T}{[R_{m\alpha }^{m\beta }]^T} \nonumber \\
{[0,{{\dot \beta }_m},0]^T} + {[R_{m\beta }^{m\gamma }]^T}{[0,0,{{\dot \gamma }_m}]^T}
\end{array} \right\}}_{{\omega _m}} \nonumber \\
&{\quad} + {[R_{a0}^{aj}]^T}{{\vec {\dot \theta}} _{a1}} +  \cdots + {[R_{a(j - 2)}^{aj}]^T}{{\vec {\dot \theta}} _{a(j - 1)}}
 + {[R_{a(j - 1)}^{aj}]^T}{{\vec {\dot \theta}} _{aj}} \nonumber \\
&= {[R_m^{a0}R_{a0}^{aj}]^T}{\omega _m} + \sum\limits_{k = 0}^j {\left\{ {{{[R_{ak}^{aj}]}^T}{{\vec {\dot \theta}} _{ak}}} \right\}} %\label{eq:J1_16pf}
\end{align*}
where $\{\forall \; j,k \in{\mathbb{N}}:0 \le j \le m, \; 0 \le k \le j\}$, $R_g^{m\alpha }=R_{x}({\alpha _m}) \in {\mathcal{SO}(3)}$, $R_{m\alpha}^{m\beta}=R_{y'}({\beta _m}) \in {\mathcal{SO}(3)}$, and $R_{m\beta}^{m\gamma}=R_{z''}({\gamma _m}) \in {\mathcal{SO}(3)}$. \qedhere
\end{proof}}

Substituting the corresponding results in \eqref{eq:J1_14} and \eqref{eq:J1_16}, the total kinetic energy is calculated as
\begin{align}
{K_E} = &\frac{1}{2}{m_m}[{{\dot p}_{mx}},{{\dot p}_{my}},{{\dot p}_{mz}}]{[{{\dot p}_{mx}},{{\dot p}_{my}},{{\dot p}_{mz}}]^T} + \frac{1}{2}\omega _m^T{I_m}{\omega _m}\nonumber \\
&+ \frac{1}{2}\sum\limits_{k = 0}^j {\left\{ {{m_{ak}}v_{ack}^T{v_{ack}} + \omega _{ack}^T{I_{ak}}{\omega _{ack}}} \right\}}.
\label{eq:J1_17}
\end{align}

The total potential energy is computed as
\begin{align} 
{V_E} = &{m_m}g{p_{mz}} + \sum\limits_{k = 0}^j {\left\{ {{m_{ak}}g{p_{ack}^T}{{[0,0,1]}^T}} \right\}} \nonumber \\
&+\frac{1}{2}{\left( {L - {L_0}} \right)^T}{K_c}\left( {L - {L_0}} \right)
\label{eq:J1_18}
\end{align}
where $g$ represents the gravity acceleration, position vector $p_{ack}$ is obtained using \eqref{eq:J1_10}, and $\frac{1}{2}{\left( {L - {L_0}} \right)^T}{K_c}\left( {L - {L_0}} \right)$ denotes the cable elastic potential energy with its variables defined in \eqref{eq:J1_8}.

Based on the computed kinetic energy ${K_E}$ and potential energy ${V_E}$ in \eqref{eq:J1_17} and \eqref{eq:J1_18}, respectively, the Lagrangian dynamic equation is calculated as
\begin{align}
{L_E} = {K_E} - {V_E}.
\label{eq:J1_19}
\end{align}

Then, the torque equations are calculated as
\begin{align}
{\tau_j} &= \frac{d}{{dt}}\left( {\frac{{\partial {L_E}}}{{\partial {{\dot q}_j}}}} \right) - \frac{{\partial {L_E}}}{{\partial {q_j}}} \nonumber\\
 &= \frac{d}{{dt}}\left( {\frac{{\partial {K_E}}}{{\partial {{\dot q}_j}}}} \right) - \frac{{\partial {K_E}}}{{\partial {q_j}}} + \frac{{\partial {V_E}}}{{\partial {q_j}}}\quad \label{eq:J1_20}
\end{align}
where ${\tau_j}$ represents the generalized force/torque applied to the dynamic system at joint $j$ to drive link $j$. 

Based on open-chain, \eqref{eq:J1_20} can be described by a new form:
\begin{align}
{\tau _j} = \scriptstyle \bigg\{
{{\left[{\nabla {{\left( {{{\left( {\nabla {{L_E}_{\dot q}}} \right)}_j}} \right)}_q}} \right]}^T}\dot q + {{\left[ {\nabla {{\left( {{{\left( {\nabla {{L_E}_{\dot q}}} \right)}_j}} \right)}_{\dot q}}} \right]}^T}\ddot q
- {{\left( {\nabla {{L_E}_q}} \right)}_j}\bigg\}
\label{eq:J1_jDOFLagrangian}
\end{align}
where $\nabla {\left(\cdot \right)_q}$ and $\nabla{\left(  \cdot \right)_{\dot q}}$ are defined as the gradient vectors of $(\cdot)$ with respect to the vectors $q$ and $\dot q$, respectively. Compared with \eqref{eq:J1_20},  \eqref{eq:J1_jDOFLagrangian} is easier to be implemented (i.e., programming). By arranging \eqref{eq:J1_jDOFLagrangian} and introducing a vector of unknown bounded disturbances ${\tau_d} \in {\mathbb{R}^{n + m}}$, the equations of motion of the HCDR can be derived as
\begin{align}
{{M}}\left( {{q}} \right){{\ddot q + C}}\left( {{{q}},{{\dot q}}} \right){{\dot q + G}}\left( {{q}} \right) + {\tau _d} = :\tau {{ = }}\left[ {\begin{array}{*{20}{c}}
{{\tau _m}}\\
{{\tau _{a}}}
\end{array}} \right]{{ = }}\left[ {\begin{array}{*{20}{c}}
{{A_mT}}\\
{{\tau _{a}}}
\end{array}} \right]
\label{eq:J1_22}
\end{align}
where $q \in {\mathbb{R}^{n + m}}$, $\dot q \in {\mathbb{R}^{n + m}}$, and $\ddot q \in {\mathbb{R}^{n + m}}$ represent the vectors of generalized coordinates, velocities, and accelerations, respectively. $M(q) \in {\mathbb{R}^{(n + m) \times (n + m)}}$ denotes the combined inertia matrix,  $C(q,\dot q) \in {\mathbb{R}^{(n + m) \times (n + m)}}$ represents the combined Coriolis and centripetal matrix, and $G(q) \in {\mathbb{R}^{n + m}}$ denote the gravitational vector, respectively. ${\tau _d} \in {\mathbb{R}^{n + m}}$ and $\tau  \in {\mathbb{R}^{n + m}}$ denote the vector of unknown bounded disturbances and forces/torques in generalized coordinates, respectively. Eq. \eqref{eq:J1_22} is the inverse dynamics model for HCDR, with $q$, $\dot q$, and $\ddot q$ are inputs. 
\begin{prop} \label{prop:J1_1}
$M$ is a symmetric and positive definite matrix \cite{Lin2001}
\begin{align}
\dot M = C + {C^T}.
\label{eq:J1_23}
\end{align}
\end{prop}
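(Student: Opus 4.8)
The plan is to establish the three claims separately, drawing on the structure of the kinetic energy $K_E$ in \eqref{eq:J1_17} and on the Lagrangian construction of $M$ and $C$ in \eqref{eq:J1_jDOFLagrangian}--\eqref{eq:J1_22}. For symmetry, I would first observe that $M$ is the Hessian of the kinetic energy with respect to the generalized velocities: from the coefficient of $\ddot q$ in \eqref{eq:J1_jDOFLagrangian}, $M_{jk}=\partial^2 K_E/\partial \dot q_j\,\partial \dot q_k$, and equality of mixed second partials gives $M_{jk}=M_{kj}$ at once. Equivalently, since $K_E=\tfrac12\dot q^{T}M\dot q$ is a scalar quadratic form in $\dot q$, only the symmetric part of $M$ contributes, so $M$ may be taken symmetric without loss of generality.

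For positive definiteness, I would argue physically and algebraically at the same time: each summand in \eqref{eq:J1_17} is a nonnegative quadratic form (the platform terms $\tfrac12 m_m\|\dot p_m\|^2+\tfrac12\omega_m^{T}I_m\omega_m$ and each link term $\tfrac12 m_{ak}v_{ack}^{T}v_{ack}+\tfrac12\omega_{ack}^{T}I_{ak}\omega_{ack}$), with the mass scalars positive and the inertia tensors $I_m,I_{ak}$ positive definite. Hence $K_E=\tfrac12\dot q^{T}M\dot q>0$ whenever $\dot q\neq 0$, which is exactly $M\succ 0$. The one thing to check is that the map $\dot q\mapsto(\dot p_m,\omega_m,v_{ack},\omega_{ack})$ is injective, i.e. that a nonzero generalized velocity cannot produce vanishing body velocities; this follows from \eqref{eq:J1_14}--\eqref{eq:J1_16}, where each joint rate $\dot\theta_{ak}$ and each component of $\dot q_m$ enters through orthogonal (full-rank) rotation blocks.

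For the identity $\dot M=C+C^{T}$, I would adopt the Christoffel-symbol representation of the Coriolis matrix implied by the $\dot q$-coefficient in \eqref{eq:J1_jDOFLagrangian}, namely $C_{ij}=\sum_k \tfrac12\big(\partial M_{ij}/\partial q_k+\partial M_{ik}/\partial q_j-\partial M_{kj}/\partial q_i\big)\dot q_k$. Adding $C_{ij}$ and $C_{ji}$ and using the already-established symmetry $M_{ik}=M_{ki}$, the two ``cross'' pairs of derivative terms cancel while the remaining pair doubles, leaving $C_{ij}+C_{ji}=\sum_k(\partial M_{ij}/\partial q_k)\dot q_k$. Since $M$ depends on time only through $q$, the chain rule gives $\dot M_{ij}=\sum_k(\partial M_{ij}/\partial q_k)\dot q_k$, and the identity follows entrywise.

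The main obstacle is not any single calculation but the non-uniqueness of the Coriolis matrix: for a given Lagrangian, $C$ is determined only up to addition of a matrix $S(q,\dot q)$ with $S\dot q=0$, and $\dot M=C+C^{T}$ holds only for the symmetric (Christoffel) factorization. The real content, therefore, is to verify that the $C$ extracted from \eqref{eq:J1_jDOFLagrangian}--\eqref{eq:J1_22} coincides with the Christoffel one --- equivalently, that the velocity-quadratic part $-\partial K_E/\partial q_j$ of the Euler--Lagrange equation has been absorbed into $C\dot q$ in the symmetric way rather than split arbitrarily. I would confirm this by expanding the $\dot q$-coefficient in \eqref{eq:J1_jDOFLagrangian} together with the $\dot q$-quadratic part of $\nabla L_{Eq}$ and matching it term-by-term against the Christoffel expression above; once that match is made, the entrywise computation of the previous paragraph closes the proof.
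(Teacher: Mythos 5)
The paper offers no proof of this proposition at all: it simply asserts the properties and cites \cite{Lin2001}, treating them as the standard facts about Lagrangian mechanical systems. Your proposal therefore supplies an argument where the paper has none, and the argument you give is the standard one: symmetry of $M$ as the Hessian of $K_E$ in $\dot q$, positive definiteness from positivity of the kinetic energy, and $\dot M = C + C^{T}$ from the Christoffel factorization of the velocity-quadratic terms. This is essentially sound, and you deserve credit for flagging the one genuine subtlety — that $C$ is only determined up to a matrix annihilating $\dot q$, so the identity holds only for the symmetric (Christoffel) choice. Two points need tightening. First, your injectivity argument for positive definiteness is too quick: the joint rates indeed enter through full-rank rotation blocks, but the platform angular velocity $\omega_m$ is obtained from the Euler rates $(\dot\alpha_m,\dot\beta_m,\dot\gamma_m)$ through the Euler-rate-to-body-rate map visible in \eqref{eq:J1_6b}, which is singular at $\beta_m=\pm\pi/2$ (gimbal lock); there $M$ degenerates to positive semidefinite. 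The claim holds only away from this representation singularity, which the paper elsewhere excludes by physical constraints, and your proof should say so explicitly. Second, the verification you defer — that the $C$ implicitly defined by \eqref{eq:J1_jDOFLagrangian} is the Christoffel one — is the actual content of the third claim and should be carried out rather than promised: expanding \eqref{eq:J1_jDOFLagrangian}, the coefficient of $\dot q_i\dot q_k$ in row $j$ is $\partial M_{jk}/\partial q_i-\tfrac12\,\partial M_{ik}/\partial q_j$, whose symmetrization over $(i,k)$ is exactly the Christoffel expression, so the identity does follow once one adopts that symmetrized $C$; with a different (unsymmetrized) reading of \eqref{eq:J1_jDOFLagrangian} the identity fails, so this choice must be stated.
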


Since the inertia matrix  $M$ is symmetric and positive definite, then the forward dynamics can be computed as
\begin{align}
{{\ddot q}} = {{{M}}^{ - 1}}\left( {{q}} \right)\left( {\left[ {\begin{array}{*{20}{c}}
{{{A_mT}}}\\
{{\tau _{a}}}
\end{array}} \right] - {{C}}\left( {{{q}},{{\dot q}}} \right){{\dot q}} - {{G}}\left( {{q}} \right) - {\tau _d}} \right)
\label{eq:J1_24}
\end{align}
where the cable tension $T$ and robot arm joint torque ${\tau _{a}}$  are inputs. 

Additionally, \eqref{eq:J1_22} can be arranged as
\begin{align}
&\left[ {\begin{array}{*{20}{c}}
{{M_{11}}(q)}&{{M_{12}}(q)}\\
{{M_{21}}(q)}&{{M_{22}}(q)}
\end{array}} \right]\left[ {\begin{array}{*{20}{c}}
{{{\ddot q}_m}}\\
{{{\ddot q}_{a}}}
\end{array}} \right] + \left[ {\begin{array}{*{20}{c}}
{{C_{11}}(q,\dot q)}&{{C_{12}}(q,\dot q)}\\
{{C_{21}}(q,\dot q)}&{{C_{22}}(q,\dot q)}
\end{array}} \right]\nonumber\\
&\left[ {\begin{array}{*{20}{c}}
{{{\dot q}_m}}\\
{{{\dot q}_{a}}}
\end{array}} \right]
+ \left[ {\begin{array}{*{20}{c}}
{{G_m}({q_m})}\\
{{G_{a}}({q_{a}})}
\end{array}} \right] + \left[ {\begin{array}{*{20}{c}}
{{\tau _{md}}}\\
{{\tau _{ad}}}
\end{array}} \right] = \left[ {\begin{array}{*{20}{c}}
{{{{A_m}T}}}\\
{{\tau_{a}}}
\end{array}} \right]
\label{eq:J1_25}
\end{align}
where ${( \cdot )_m} \in {\mathbb{R}^n}$ and ${( \cdot )_{a}} \in {\mathbb{R}^m}$ represent the vector of the mobile platform variables and the robot arm variables, respectively. It is clear that this equation includes the dynamics of the CDPR and the mounted robot arm.

In summary, some key features of the proposed modeling method can be highlighted as follows: 1) {\color{black}The derivation of the equations of motion (e.g.,{~}{\eqref{eq:J1_10}}--{\eqref{eq:J1_16}} and{~}{\eqref{eq:J1_jDOFLagrangian}}) and the proof of{~}{\eqref{eq:J1_16}} provide a very effective way to find items for generalized system modeling.} 
2) Traditionally, based on the rule of Standard Denavit-Hartenberg (DH) parameters \cite{Denavit1955}, a revolute joint must rotate about its $Z$-axis. Sometimes, it is inconvenient or impossible to find DH parameters (e.g., rotating around the $X$-axis or $Y$-axis). {\color{black}Moreover, it is far easier to configure DH parameters for complex mechanisms~\cite{R.Balasubramanian2011}.} The proposed method avoids this drawback, i.e., it is unnecessary to follow DH configurations, and can be applied to any coordinate frames (e.g., rotating around the $X$-axis, $Y$-axis, or $Z$-axis), including revolute and prismatic joints. 
3) The above modeling approach in this section can be easily extended to other types of hybrid robots by changing structure matrix $A_m$ in~\eqref{eq:J1_5} based on their configurations, e.g., a robot arm(s) mounted on an aircraft platform~\cite{Tardella2016,PRODRONE2016}. {\color{black}To illustrate the extension approach, a detailed example is provided (see Appendix~\ref{appendix:J1_DroneArm_Dynamics}). In this case, the DH configuration is also nonessential.}

\subsection{Redundancy Resolution}
Cable-driven robots (as shown in \autoref{fig:J1_GeneralizedHCDR}) can be categorized into under-actuated, fully-actuated, and over-actuated \cite{Corke2011}. The first two types of robots denote the number of driven cables $N$ is no more than the DOF of a robot $n$, i.e., $N \le n$; the third type of robots represents the number of driven cables $N$ is more than the DOF of a robot n, i.e., $N > n$. Then, the value of $(N-n)$ represents the degree of redundancy (DOR). When redundancy problems exist, there are infinite solutions for kinematics, which make the motion planning challenging~\cite{Taghirad2011}. Usually, redundancy resolution (i.e., over-actuated) problems are more general for cable-driven robots and can be solved only using the pseudo-inverse approach{~}\cite{Corke2011}, but the solutions are not optimal. {\color{black}Some other approaches are also available, such as a combination of pseudo-inverse and null-space method{~}{\cite{Corke2011,Mendez2014,Behzadipour2006}}, damped least-squares approach{~}{\cite{J.Li2017}}, and energy-based method{~}{\cite{J.Li2017}}.} In this paper, we use the combined method{~}\cite{Corke2011,Mendez2014,Behzadipour2006} to address the redundancy resolution problem.

When $q_m$, $\dot{q}_m$, and $\ddot{q}_m$ are given, ${\tau_m}={{A_mT}}$ can be computed using \eqref{eq:J1_9} or \eqref{eq:J1_22}. Then, the cable tension $T$ is calculate as
\begin{align}
{T} = {A_m^+}{\tau _m} = {A_m^T}{({{A_m}}{A_m^T})^{ - 1}}{\tau _m}
\label{eq:J1_26}
\end{align}
{\color{black}where ${A_m^+}$ represents the pseudo-inverse of matrix $A_m$ and ${A_m^+}$ is equal to ${A_m^T}{({A_m}{A_m^T})^{ - 1}}$. 

Singularity occurs when $A_m$ is not full rank or $\rm{det}(A_m)=0$. It leads HCDRs or CDPRs to be uncontrollable; hence, singularity should be avoided in applications. However, by given physical constraints, e.g., $\beta_m \ne \pm \pi/2$ for the configuration shown in~\autoref{fig:J1_9dofHCDPR}, the singularity of $A_m$ is not realistically reached. For a large matrix $A_m$, one can find a numerical solution to reduce the computation process of the pseudo-inverse of $A_m$.}

In \eqref{eq:J1_26}, the elements of the cable tension $T \in {\mathbb{R}^N}$ might be negative. However, in practice, they cannot drive the mobile platform if they are negative. The redundancy resolution of the cable tension $T$ can be formulated as
\begin{align}
{T} = {A_m^T}{({{A_m}}{A_m^T})^{ - 1}}{\tau _m} + {N_A}\lambda
\label{eq:J1_27}
\end{align}
where ${N_A} \in {\mathbb{R}^{N \times (N - n)}}$ represents the null space of structure matrix $A_m$ ($A_m$ is calculated using \eqref{eq:J1_5}), and $\lambda  \in {\mathbb{R}^{N - n}}$ is a vector of arbitrary values. In \eqref{eq:J1_27}, ${N_A}\lambda$ belongs to the null space of $A_m$, since it can be described as ${A_m}\left( {{N_A}\lambda } \right) = \left( {{A_m}{N_A}} \right)\lambda  = \textbf{0}$. The expression ${N_A}\lambda$ denotes antagonistic cable tensions. The cable tension $T$ increases if all the antagonistic cable tensions are positive. Hence, the vector $\lambda$ can be optimized (e.g., using the stiffness optimization method in the next section) to ensure that all the cable tensions are positive.

%\vspace{-0.7cm}
\subsection{Stiffness Optimization}
To solve the above problem of selecting $\lambda$, a stiffness maximization method is proposed as below: consider the same condition as \eqref{eq:J1_26}, the stiffness matrix $K$ is defined as
\begin{align}
{K}: = &{}\frac{{d({{A_mT}})}}{{d{{{P}}_m}}} = \frac{{d{{A_m}}}}{{d{{{P}}_m}}}{{T}} + {{A_m}}\frac{{d{{T}}}}{{d{{{P}}_m}}} = \frac{{d{{A_m}}}}{{d{{{P}}_m}}}{{T}} 
+ {{A_m}}\left( {\frac{{d{{T}}}}{{d{{L}}}}} \right)\nonumber\\
&{}\left( {\frac{{d{{L}}}}{{d{{{P}}_m}}}} \right) = \frac{{d{{A_m}}}}{{d{{{P}}_m}}}{{T}} + {{A_m}}{{{K}}_c}{A_m^T} = :{{{K}}_T} + {{{K}}_k}
\label{eq:J1_28}
\end{align}
where $P_m:=[{p_{mx}},{p_{my}},{p_{mz}},{\alpha _m},{\beta _m},{\gamma _m}]^T\in {\mathbb{R}
^6}$, $T$, and $L$ represent the position and orientation of the center of mass of the mobile platform, cable tension vector, and cable length vector, respectively. Matrices $K_T$ and $K_k$ are a product of the cable tensions and cable stiffness, respectively, where ${K_c} = \frac{{d{{T}}}}{{d{{L}}}} = {\color{black}{\rm{diag}}\left[ {{k_{c1}},{k_{c2}}, \cdots ,{k_{ci}}, \cdots ,{k_{cN}}} \right]} \in {\mathbb{R}
^{N \times N}}$ and $k_{ci}$ denotes the $i$th cable stiffness (same as \eqref{eq:J1_8}).

Usually, $K$ is obtained at static condition for easier stability analysis. In this case, if \eqref{eq:J1_28} is expanded in terms of the kinematic parameters $L_i$, ${\hat L}_i$, and ${r_i}$, the matrices $K_T$ and $K_k$ can be described as \cite{Behzadipour2006}
\begin{align}
{K_T} = &\sum\limits_{i = 1}^N {\frac{T_i}{L_i}\left[ {\begin{array}{*{20}{c}}
{{{I}} - {{{{\hat L}}}_i}{{\hat L}}_i^T}&{({{I}} - {{{{\hat L}}}_i}{{\hat L}}_i^T){{[{{{r}}_i}]}^T}}\\
{[{{{r}}_i}]({{I}} - {{{{\hat L}}}_i}{{\hat L}}_i^T)}&{[{{{r}}_i}]({{I}} - {{{{\hat L}}}_i}{{\hat L}}_i^T){{[{{{r}}_i}]}^T}}
\end{array}} \right]} \nonumber \\
&- \sum\limits_{i = 1}^N {{T_i}\left[ {\begin{array}{*{20}{c}}
\textbf{0}&\textbf{0}\\
\textbf{0}& [{{{{\hat L}}}_i}]{[{{{r}}_i}]}\end{array}}\right]}
\label{eq:J1_29}
\end{align}
and
\begin{align}
{K_k} = \sum\limits_{i = 1}^N {{k_{ci}}\left[ {\begin{array}{*{20}{c}}
{{{{{\hat L}}}_i}{{\hat L}}_i^T}&{{{{{\hat L}}}_i}{{\hat L}}_i^T{{[{{{r}}_i}]}^T}}\\
{[{{{r}}_i}]{{{{\hat L}}}_i}{{\hat L}}_i^T}&{[{{{r}}_i}]{{{{\hat L}}}_i}{{\hat L}}_i^T{{[{{{r}}_i}]}^T}}
\end{array}} \right]}
\label{eq:J1_30}
\end{align}
where 
\begin{small}
$[{r_i}]: = \left[ {\begin{array}{*{20}{c}}
0&{ - {{({R_g^m}{r_i})}_{3,1}}}&{{{({R_g^m}{r_i})}_{2,1}}}
\\
{{{({R_g^m}{r_i})}_{3,1}}}&0&{ - {{({R_g^m}{r_i})}_{1,1}}}
\\
{ - {{({R_g^m}{r_i})}_{2,1}}}&{{{({R_g^m}{r_i})}_{1,1}}}&0
\end{array}} \right]$
\end{small}, \begin{small}
$[{{\hat L}_i}]: = -\left[ {\begin{array}{*{20}{c}}
0&{ - {{\hat L}_{iz}}}&{{{\hat L}_{iy}}}\\
{{{\hat L}_{iz}}}&0&{ - {{\hat L}_{ix}}}\\
{ - {{\hat L}_{iy}}}&{{{\hat L}_{ix}}}&0
\end{array}} \right]$, ${r_i} = {[{r_{ix}},{r_{iy}},{r_{iz}}]^T}$
\end{small}, ${{\hat L}_i} = {[{\hat L_{ix}},{\hat L_{iy}},{\hat L_{iz}}]^T}$, and $I$ is the identity matrix. Eq. \eqref{eq:J1_29} and \eqref{eq:J1_30} are equivalent to the results of the four-spring model proposed by Behzadipour and Khajepour \cite{Behzadipour2006}. They proved that a static cable-driven robot is stable if the stiffness matrix $K$ is positive definite (sufficient condition). In addition, elements of $K_k$ cannot be controlled, because they are the property of the cables. Hence, the stiffness of HCDR can only be changed by optimizing $K_T$.

Additionally, combine \eqref{eq:J1_27} and \eqref{eq:J1_28}, we get
\begin{align}
{K}(\lambda ) = &\left( {\frac{{d{{A_m}}}}{{d{{{P}}_m}}}{N_A}} \right)\lambda  + \frac{{d{{A_m}}}}{{d{{{P}}_m}}}{A_m^T}{({A_m}{A_m^T})^{ - 1}}{\tau _m} \nonumber \\
&+ {{A_m}}{{{K}}_c}{A_m^T}.
\label{eq:J1_31}
\end{align}

Since $K$ is positive definite (or positive semidefinite), the maximum stiffness is determined by its eigenvalues \cite{M.Li2011,Kock1998,Gosselin1990}. Hence, the optimization problem can be described as
\begin{subequations} 
\label{eq:J1_32}
\begin{align}
&{\mathop{\rm{max}}\limits_\lambda} && J_K={\rm{eig}}{({{K}}(\lambda ))^T}{H_\lambda }{\rm{eig}}({{K}}(\lambda )) \label{eq:J1_32a}\\
&{\rm{s. \; t.}} && {{{M}}\left( {{q}} \right){{\ddot q + C}}\left( {{{q}},{{\dot q}}} \right){{\dot q + G}}\left( {{q}} \right) + {\tau _d}{{ = }}\left[ {\begin{array}{*{20}{c}}{{{A_mT}}}\\{{\tau_a}}\end{array}} \right]} \label{eq:J1_32b}\\
&&& {{K}(\lambda)} = {\left( {\frac{{d{{A_m}}}}{{d{{{P}}_m}}}{N_A}} \right)\lambda  + \frac{{d{{A_m}}}}{{d{{{P}}_m}}}{A_m^T}{({A_m}{A_m^T})^{ - 1}}{\tau _m}} \nonumber \\
&&&{+ {{A_m}}{{{K}}_c}{A_m^T}} \label{eq:J1_32c} \\
&&& 0 \le {T_{i\min }} \le {T_i} \le {T_{i\max }}, \;i = 1,2, \cdots ,N \label{eq:J1_32d}\\
&&&{\color{black} T = [T_1,T_2,\cdots,T_i\cdots,T_N]^T} \label{eq:J1_32e}
\end{align}
\end{subequations}
where $T_i$, ${T_{i\min }}$, and ${T_{i\max }}$ represent the $i$th cable tension, minimum and maximum allowable tensions, respectively. ${H_\lambda }$ denotes the stiffness weighting matrix. To ensure the stability of a HCDR in practical applications, some alternative strategies can be adopted as below: 1) optimizing its trajectory to keep all the eigenvalues of $K$ positive and 2) limiting the maximum payload \cite{Behzadipour2006}. Comparing with the existed stiffness optimization approaches in{~}\cite{Khajepour2015,Mendez2014,H.Jamshidifar2017, H.Jamshidifar2018}, \eqref{eq:J1_32a} is introduced by combining the eigenvalues of $K$ and weighting matrix ${H_\lambda}$ so that one is able to optimize the system stiffness based on specific needs (by tuning ${H_\lambda}$). Meanwhile, \eqref{eq:J1_32} can be applied to not only CDPRs but also HCDRs.
\begin{figure}[h]\centering
	%\vspace{-0.4cm}
	\includegraphics[width=88mm]{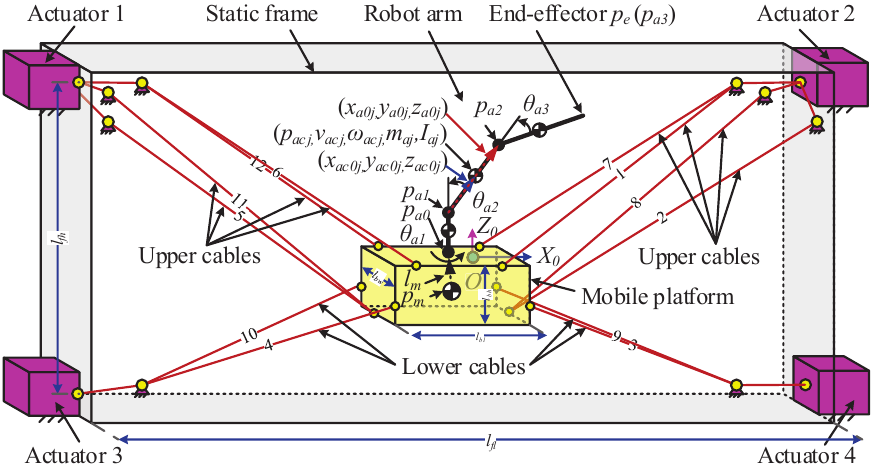}
	%\vspace{-0.6cm}
	\caption{Configuration of the 9-DOF HCDR. The CDPR is driven by four actuators with 12 cables; the robot arm has three joints with the first, second, and third joints rotating about $Z_{a0}$-, $Y_{a1}$-, and $Y_{a2}$-axis (i.e., the corresponding moving frames), respectively.}
	\label{fig:J1_9dofHCDPR}
%\vspace{-0.6cm}
\end{figure}

%==================================================
% \section{HCDR Example for Analysis}
%==================================================
\section{HCDR Example for Analysis}\label{sec:J1_HCDPRExampleforAnalysis}
\subsection{HCDR Configuration and Kinematic Constraints}
\label{subsec:J1_HCDPRConfig}
To analyze the generalized HCDR model in \autoref{sec:J1_GeneralizedSystemModeling} and study its control performance, a HCDR example is given in \autoref{fig:J1_9dofHCDPR}. The proposed full model of the HCDR has 9 DOFs (the CDPR parameters come from \cite{Mendez2014,Rushton2016,Rushton2018}), which consists of a 3-DOF robot arm (i.e., $m=3$), a 6-DOF CDPR (i.e., $n=6$), twelve cables, and four servo motors. The 3-DOF robot arm has three revolute joints, rotating about $Z$-axis (joint 1 frame), $Y$-axis (joint 2 frame), and $Y$-axis (joint 3 frame), respectively. The actuators are used to drive the cables to move the mobile platform. The robot arm is fixed on the mobile platform and moves with it. The twelve cables include four sets of cables: two sets of four-cable arrangement on the top and two sets of two-cable arrangement on the bottom. The driven cable mount locations the HCDR are shown in \autoref{table:J1_CableMountLocations}. Each set of cables is controlled by one motor. In addition, the top actuators and bottom actuators control the upper cable lengths and lower cable tensions, respectively. The upper cables also restrict the orientation of the mobile platform, i.e., the kinematic constraints. In addition, the inertial coordinate frame $O\left\{ {{x_0},{y_0},{z_0}} \right\}$ is located at the center of the static fixture.

Additionally, other HCDR parameters are shown in \autoref{table:J1_HCDPRParameters}, where $m_m$ and $I_m$ represent the mass and moment of inertia of the mobile platform, respectively. $m_{aj}$ and $I_{aj}$ ($\{\forall \; j \in{\mathbb{N}} :1 \le j \le 3\}$) respectively denote the mass and moment of inertia of robot arm links. Also, $T_{i\min}$ and $T_{i\max}$ ($\{\forall \; i \in{\mathbb{N}} :1 \le i \le 12\}$) represent the minimum and maximum allowable cable tensions, respectively. The sizes of the static fixture (e.g., $l_{fl}$) and mobile platform (e.g., $l_{bl}$), body-fixed positions (e.g., $[x_{a0j},y_{a0j},z_{a0j}]^T$), and etc. are also given in \autoref{table:J1_HCDPRParameters}.
%\vspace{-0.3cm}
\begin{table}[h]
	\renewcommand{\arraystretch}{1.3}
	% \caption{Driven Cable Mount Locations [33]}
	\caption{Driven Cable Mount Locations}
	%\vspace{-0.3cm}
	\centering
	\label{table:J1_CableMountLocations}
	%\centering
	\resizebox{\columnwidth}{!}{
		\begin{tabular}{c c c c c c c}
			\hline\hline \\[-3mm]
$N$ & $a_{ix} \;\rm{(m)}$ & $a_{iy} \;\rm{(m)}$ & $a_{iz} \;\rm{(m)}$ & 
$r_{ix} \;\rm{(m)}$ & $r_{iy} \;\rm{(m)}$ & $r_{iz} \;\rm{(m)}$  \\[1.6ex] \hline
1 & 1.500 & 0.000 & 0.500 & 0.153 & -0.065 & 0.048\\
2 & 1.580 & -0.065 & 0.404 & 0.233 & 0.000 & -0.048\\
3 & 1.500 & 0.000 & -0.500 & 0.223 & -0.088 & -0.017\\
4 & -1.500 & 0.000 & -0.500 & -0.223 & -0.088 & -0.017\\
5 & -1.580 & -0.065 & 0.404 & -0.233 & 0.000 & -0.048\\
6 & -1.500 & 0.000 & 0.500 & -0.153 & -0.065 & 0.048\\
7 & 1.500 & 0.000 & 0.500 & 0.153 & 0.065 & 0.048\\
8 & 1.580 & 0.065 & 0.404 & 0.233 & 0.000 & -0.048\\
9 & 1.500 & 0.000 & -0.500 & 0.223 & 0.088 & -0.017\\
10 & -1.500 & 0.000 & -0.500 & -0.223 & 0.088 & -0.017\\
11 & -1.580 & 0.065 & 0.404 & -0.233 & 0.000 & -0.048\\
12 & -1.500 & 0.000 & 0.500 & -0.153 & 0.065 & 0.048\\
		\hline\hline
		\end{tabular}
	}
\end{table}
%\vspace{-0.3cm}
\begin{table}[h]
	\renewcommand{\arraystretch}{1.3}	
	\caption{HCDR Parameters}
	%\vspace{-0.3cm}
	\centering
	\label{table:J1_HCDPRParameters}
	%\centering
	\resizebox{\columnwidth}{!}{
		\begin{tabular}{c c c c}
			\hline\hline \\[-3mm]
Symbol & Values & Symbol & Values  \\[1.6ex] \hline
$l_{fl}$ & $3.160\;\rm{m}$ & $l_{fh}$ & $1.000\;\rm{m}$\\
$l_{bl}$ & $0.365\;\rm{m}$ & $l_{bw}$ & $0.130\;\rm{m}$ \\
$l_{bh}$ & $0.096\;\rm{m}$ & $l_{m}$  & $[0,0,0.048]^T\;\rm{m}$\\
$m_{aj}$ & $0.400\;\rm{kg}$ & $I_{aj}$ & ${\color{black}\rm{diag}[0.100,0.100, 0.100]}\;\rm{kg \cdot m^2}$\\
$[x_{ac0j},y_{ac0j},z_{ac0j}]^T$ & $[0,0,0.050]^T\;\rm{m}$ & $[x_{a0j},y_{a0j},z_{a0j}]^T$ & $[0,0,0.100]^T\;\rm{m}$\\
$m_m$ & $10.000 \;\rm{kg}$ & $I_m$ & ${\color{black}\rm{diag}[0.0218,0.1187,0.1251]}\;\rm{kg \cdot m^2}$\\
$EA_i$ & $100 \;\rm{N}$ & $[T_{i\min},\;T_{i\max}]$ & $[5, 80] \;\rm{N}$\\
$g$ & $9.810 \;\rm{m/s^2}$ & &\\
		\hline\hline
		\end{tabular}
	}
\vspace{-0.2cm}
\end{table}

\subsection{Dynamics of the 9-DOF HCDR}
\label{subsec:J1_9DOFHCDPRDyn}
By applying the modeling method in \autoref{sec:J1_GeneralizedSystemModeling}, the detailed motion of equations can be computed for the specific 9-DOF system (see Appendix \ref{appendix:J1_9DOFDynamics}), where $q = [{p_{mx}},{p_{my}},{p_{mz}},{\alpha _m},{\beta _m},{\gamma _m},{\theta _{a1}},{\theta _{a2}},{\theta _{a3}}]_{9 \times 1}^T \in {\mathbb{R}^{6 + 3}}$, ${{\dot q}} \in \mathbb{R}{^{6 + 3}},{{\ddot q}} \in \mathbb{R}{^{6 + 3}},{{M}}\left( {{q}} \right) \in \mathbb{R}{^{(6 + 3) \times (6 + 3)}},{{C}}\left( {{{q}},{{\dot q}}} \right) \in \mathbb{R}{^{(6 + 3) \times (6 + 3)}}$, ${{G}}\left( {{q}} \right) \in \mathbb{R}{^{6 + 3}},{\tau _d} \in \mathbb{R}{^{6 + 3}},{\tau _m} \in \mathbb{R}{^6},{\tau _{a}} \in \mathbb{R}{^3},{{A_m}} \in \mathbb{R}{^{6 \times 12}}$, and ${{T}} \in \mathbb{R}{^{12}}$. However, because of the kinematic constraints, the system is fully controllable in $x_0z_0$ plane, then the 9-DOF HCDR is simplified as a 5-DOF in-plane system. The new control inputs are defined as ${u}: =(u_m,u_a):= [{{T_3},{T_4},{\tau _{a2}},{\tau _{a3}}]^T} \in {\mathbb{R}^4}$, where $u_m=[{T_3},{T_4}]^T \in {\mathbb{R}^2}$ denote the lower cable tensions (two sets of two-cable arrangement on the bottom), i.e., ${T_3}$ (driven by actuator 3) represents cable tensions 4 and 10, and ${T_4}$ (driven by actuator 4) denotes cable tensions 3 and 9 (shown in \autoref{fig:J1_9dofHCDPR}). $u_a={\tau_{a}}= {[{\tau _{a2}},{\tau _{a3}}]^T} \in {\mathbb{R}^2}$ represent input torques corresponding the second and the third joints of the robot arm. For the simplified 5-DOF HCDR, the CDPR has a more number of actuators (4 actuators) than the total DOFs (3 DOFs), and the robot arm has an equal number of actuators to its total DOFs (2 DOFs), so they are over-actuated and fully-actuated subsystems, respectively. Hence, \eqref{eq:J1_24} can be expressed as
\begin{align}
\dot x(t) = f(x(t),u(t),(L_{01}(t),L_{02}(t))), \; x(0) = {x_0}
\label{eq:J1_33}
\end{align}
where $x:= [{p_{mx}},{\dot p_{mx}},{p_{mz}},{\dot p_{mz}},{\beta _m},{{\dot \beta }_m},{\theta _{a2}},{{\dot \theta }_{a2}},{\theta _{a3}},{{\dot \theta }_{a3}}{]^T}$ $\in {\mathbb{R}^{10}}$ represents the states, $u \in {\mathbb{R}^4}$ denotes the control inputs, ${L_{01}}$ and ${L_{02}}$ represent the upper unstretched cable lengths (the two sets of four-cable arrangement on the top), respectively, i.e., ${L_{01}}$ (driven by actuator 1) denotes unstretched cable lengths 5, 6, 11, and 12, and ${L_{02}}$ (driven by actuator 2) represents unstretched cable lengths 1, 2, 7, and 8 (as shown in \autoref{fig:J1_9dofHCDPR}). ${x_0} \in {\mathbb{R}^{10}}$ is the initial states and $t \ge 0$.

By linearizing the nonlinear \eqref{eq:J1_33} around the reference states $x_r$ and control inputs $u_r$, the continuous time state-space representation (Linear Time-Varying System (LTV)) can be described as
\begin{align}
\begin{array}{l}
\dot x(t) = A(t)x(t) + B(t)u(t) + B(t)w(t)\\
y(t) = C(t)x(t), \; x(0) = {x_0}
\end{array}
\label{eq:J1_34}
\end{align}
with the outputs $y(t) \in {\mathbb{R}^{10}}$, matrices $A(t) = {\left. {\frac{{\partial f(x,u)}}{{\partial x}}} \right|_{x = {x_r},u = {u_r}}} \in {\mathbb{R}^{10 \times 10}}$, $B(t) = {\left. {\frac{{\partial f(x,u)}}{{\partial u}}} \right|_{x = {x_r},u = {u_r}}} \in {\mathbb{R}^{10 \times 4}}$, and $C(t) = I  \in {\mathbb{R}^{10 \times 10}}$. $w(t)=(w_m(t),w_a(t)) \in {\mathbb{R}^{4}}$ are white noises with zero mean Gaussian, where $w_m(t) \in {\mathbb{R}^{2}}$ and $w_a(t) \in {\mathbb{R}^{2}}$ are noises to the CDPR and robot arm, respectively. 

Additionally, for the specific HCDR, the upper four cables are utilized for position control and the lower cables are used to set cable tensions. Hence, the specific stiffness matrix \eqref{eq:J1_29} and \eqref{eq:J1_30} can be rearranged as $K_T = \sum\limits_{i = 1}^{12} {( \cdot )}$ and $K_k = \sum\limits_{i \in \{1,2,5,6,7,8,11,12\}}{(\cdot)}$, respectively. The actual states of the mobile platform and robot arm can be estimated using an Inertial Measurement Unit (IMU) (or vision-based tracking system) and encoders, respectively.

%==================================================
% \section{Vibration Control Design}
%==================================================
\section{Vibration Control Design}
\label{sec:J1_VibCtrlDesign}
For the proposed HCDR, a key objective is to develop effective control schemes to minimize vibration and improve the accuracy of the end-effector, i.e., the position-holding performance of the CDPR and the position accuracy performance of the end-effector relative to its reference trajectory.

For the configuration of the HCDR shown in Section III, the eight upper cables, four lower cables, and robot arm are based on position control, force control, and torque control, respectively, i.e., their corresponding inputs are positions (cable lengths), forces, and joint torques. Furthermore, the flexible cables reduce the overall stiffness of the robot, so vibrations become a serious problem for precise control \cite{Behzadipour2006}. Another major problem is maintaining cable tensions to keep large enough stiffness for the robot, as mentioned above. Besides, since the driven cables are flexible, the positions of the mobile platform or actual cable lengths cannot be computed directly from the measurements of encoders (embedded in the corresponding driven actuators). However, the upper unstretched (i.e., nominal) cable lengths ($L_{01}$ and $L_{02}$ in \eqref{eq:J1_33}) can be obtained using \eqref{eq:J1_33} when the reference trajectory $r(t)=x_r$ is given. Then, we can readjust lower cable tensions using \eqref{eq:J1_32}. Hence, tracking the reference trajectory as well as optimizing the lower cable tensions to satisfy the required stiffness of the HCDR should be included in the control design.

Based on the above analysis, in order to achieve the above objective, the proposed control structures of the HCDR are shown in \autoref{fig:J1_ThreeCtrlStructure}(a-c). Additionally, because of the kinematic constraints, the system \eqref{eq:J1_34} is fully controllable in $x_0z_0$ plane. The states and control inputs are simplified for the proposed vibration control.
%\vspace{-0.2cm}
\begin{figure}[h]\centering
%\vspace{-0.2cm}
	\includegraphics[width=88mm]{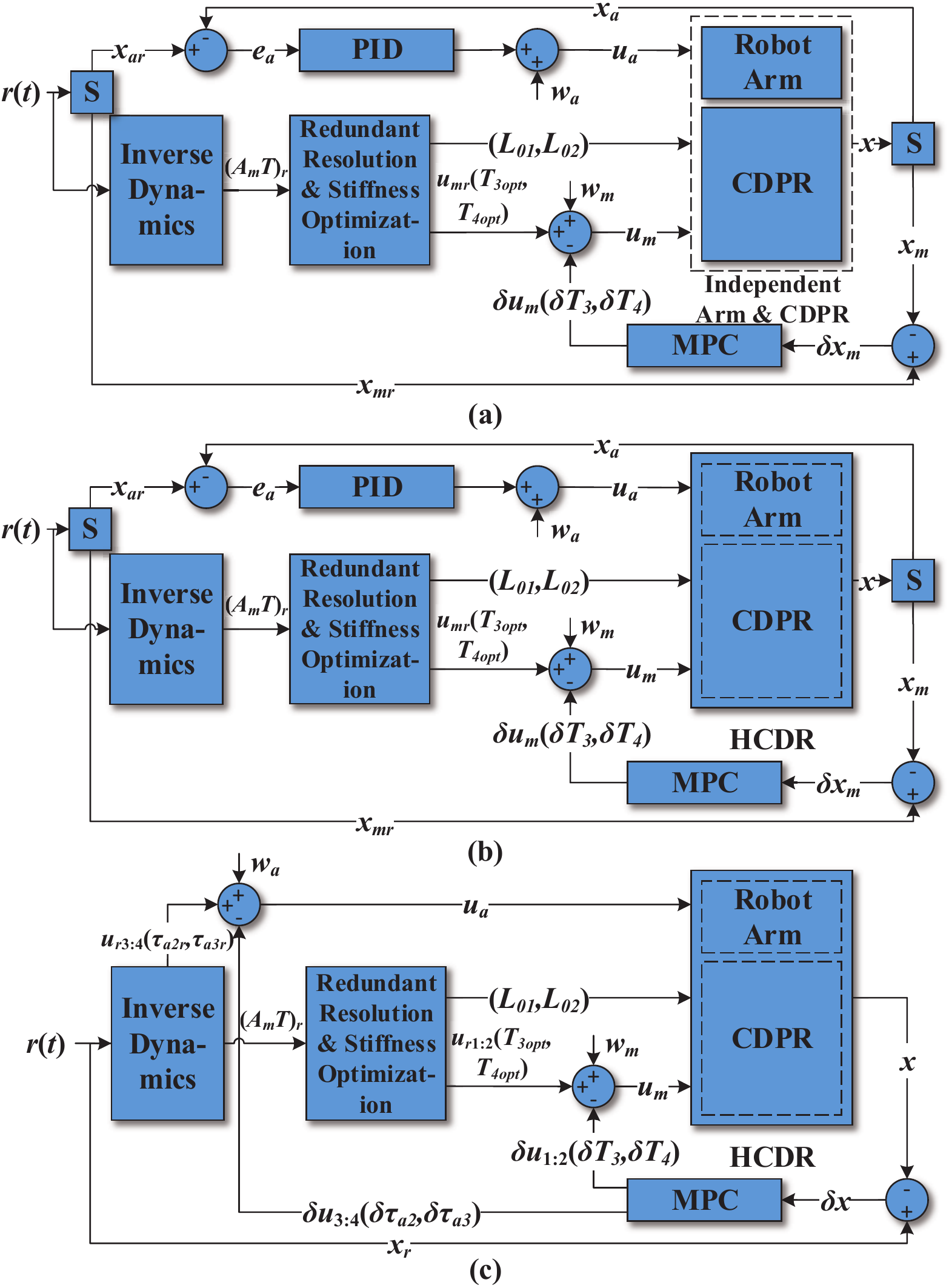}
	%\vspace{-0.5cm}
	%\vspace{-0.2cm}
	\caption{Three types of control architecture of the HCDR. (a) Independent control with the CDPR and the robot arm are decoupled; (b) Integrated control-I with the CDPR and the robot arm are coupled; and (c) Integrated control-II with the CDPR and the robot arm are coupled.}
%\vspace{-0.8cm}
\label{fig:J1_ThreeCtrlStructure}
\end{figure}

\subsection{Independent Control}
\label{subsec:J1_IndependentCtrl}
For the independent plant (i.e., the Independent Arm \& CDPR block diagram shown in \autoref{fig:J1_ThreeCtrlStructure}(a)), it includes two independent subsystems: the flexible CDPR and the rigid robot arm. There is no coupled forces/torques between them. In this case, the dynamic model of the CDPR can be developed using \eqref{eq:J1_9} or by replacing \eqref{eq:J1_17} and \eqref{eq:J1_18} with ${K_E} = \frac{1}{2}{m_m}[{{\dot p}_{mx}},{{\dot p}_{my}},{{\dot p}_{mz}}]{[{{\dot p}_{mx}},{{\dot p}_{my}},{{\dot p}_{mz}}]^T} + \frac{1}{2}\omega _m^T{I_m}{\omega _m}$ and ${V_E} = {m_m}g{p_{mz}}+\frac{1}{2}{\left( {L - {L_0}} \right)^T}{K_c}\left( {L - {L_0}} \right)$, respectively. The dynamic equations of the robot arm are derived by replacing \eqref{eq:J1_17} and \eqref{eq:J1_18} with ${K_E} = \frac{1}{2}\sum\limits_{k = 0}^3 {\left\{ {{m_{ak}}v_{ack}^T{v_{ack}} + \omega _{ack}^T{I_{ak}}{\omega _{ack}}} \right\}}$ and ${V_E} = \sum\limits_{k = 0}^3 {\left\{ {{m_{ak}}g{p_{ack}^T}{{[0,0,1]}^T}} \right\}}$, respectively. By substituting the new \eqref{eq:J1_17} and \eqref{eq:J1_18} into \eqref{eq:J1_19}, the independent nonlinear dynamic equations of the CDPR and robot arm can be derived (in forms of \eqref{eq:J1_22} and \eqref{eq:J1_24}). The LTV model of the CDPR is expressed in form of \eqref{eq:J1_34}. The block diagram $\rm{S}$ is used to select elements from input vector. {\color{black}It can be described as
\begin{align} \label{eq:S_blockdiagram_a}
\left[\begin{array}{cc}
x_m \\
\hline
x_a \end{array}\right]
= \underbrace {\left[ \begin{array}{cc}
{{I_{6 \times 6}}}&{{{\bf{0}}_{6 \times 4}}}\\
\hline
{{{\bf{0}}_{4 \times 6}}}&{{I_{4 \times 4}}}
\end{array} \right]}_{\rm{Block~diagram~S}}
\left[\begin{array}{cc}
x \\
\hline
x \end{array}\right]
\end{align}
where $I$ represents the identity matrix. $x$, $x_m$, and $x_a$ denote the state vectors of the whole system, CDPR, and robot arm, respectively. In~\eqref{eq:S_blockdiagram_a}, $x$, $x_m$, and $x_a$ can be replaced with reference state vectors $x_r$, $x_{mr}$, and $x_{ar}$, respectively.}

\renewcommand{\algorithmicrequire}{\textbf{Input:}}  % Use Reference state $x_r$
\renewcommand{\algorithmicensure}{\textbf{Output:}} % Use Reference input $u_r$
%\vspace{-0.3cm}
\begin{algorithm}[!t]
\caption{Computation of the optimal cable tensions $T_{3opt}$ and $T_{4opt}$}
\label{algorithm:J1_optimalRefInput}
\begin{algorithmic}[1]
\Require
    Reference trajectory $r(t)=x_r$.
\Ensure
    Optimal cable tensions $T_{3opt}$ and $T_{4opt}$.
\State Given $r(t)$ and calculate the nominal matrix $A_{mr}$ and torque $(A_mT)_r$ by using \eqref{eq:J1_5} and \eqref{eq:J1_32b}, respectively;
\State Given the stiffness weighting matrix ${H_\lambda}$ (e.g., in all the case studies in this paper, ${H_\lambda}$ is equal to the identity matrix $I$) and solve \eqref{eq:J1_32a} (which also subjects to \eqref{eq:J1_32c} and \eqref{eq:J1_32d}), then the optimal stiffness ${{K}(\lambda)}_{opt}$ and variable ${\lambda}_{opt}$ are obtained;
\State Resubstitute ${{K}(\lambda)}_{opt}$, ${\lambda}_{opt}$, $A_{mr}$, and $(A_mT)_r$ into \eqref{eq:J1_32c}, the optimal cable tensions $T_{3opt}$ and $T_{4opt}$ are computed;
\State {\bf Return} $T_{3opt}$ and $T_{4opt}$.
\end{algorithmic}
\end{algorithm}

\begin{figure}[!t]
\centering   
%\vspace{-0.5cm} 
\subfigure[]{\label{fig:J1_JK_2dview}\includegraphics[height=32mm]{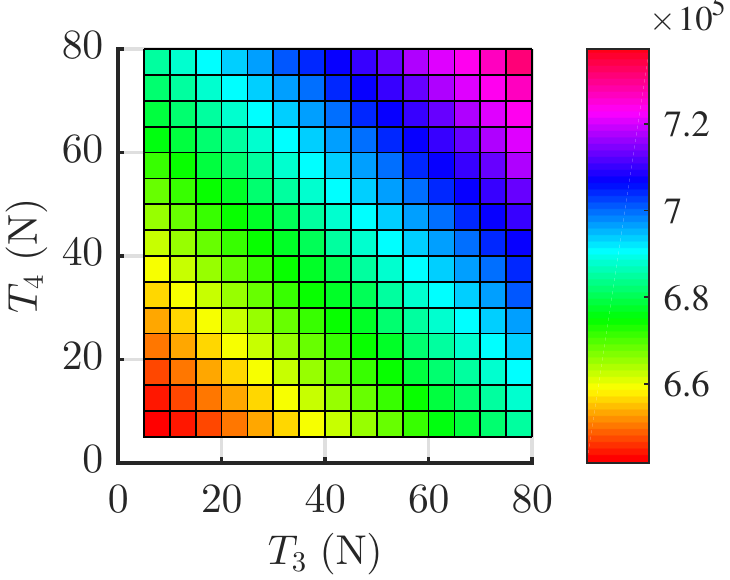}}
\vspace{-0.2cm}
\subfigure[]{\label{fig:J1_JK_3dview}\includegraphics[height=32mm]{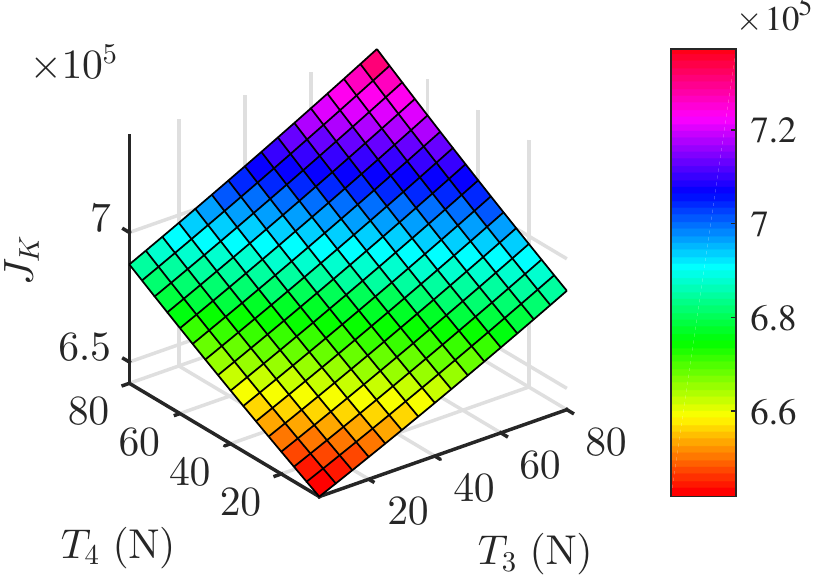}}
\vspace{-0.2cm}
\subfigure[]{\label{fig:J1_JK_eig}\includegraphics[width=85mm]{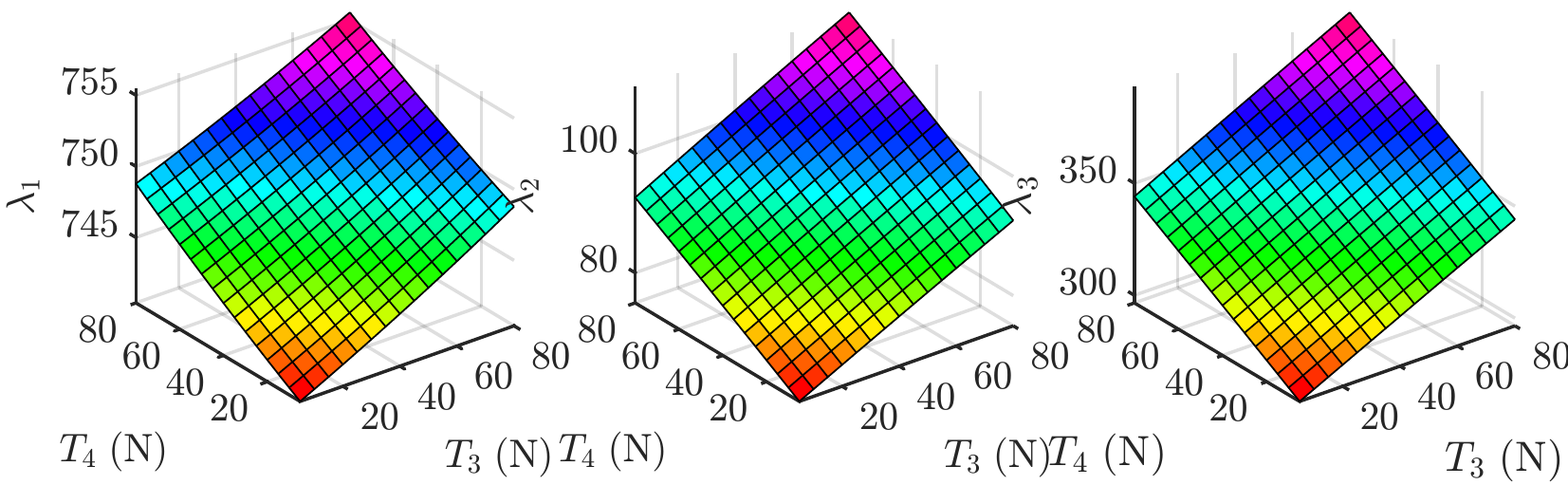}}
\vspace{-0.2cm}
\caption{An stiffness optimization example using{~}\autoref{algorithm:J1_optimalRefInput}, where the mobile platform is stationary (i.e., position-holding at $[p_{mx},p_{mz}]^T=[0,0]^T$) and the upper unstretched cable lengths are equal to $L_{01}=L_{02}=1.005 \; \rm{m}$. (a) $X$-$Y$ view, (b) 3D view, and (c) eigenvalues of the stiffness matrix $K$.}
\label{fig:J1_J_K}
%\vspace{-0.6cm}
\end{figure}

When the reference trajectory $r(t)=x_r$ is given (in \autoref{fig:J1_ThreeCtrlStructure}(a)),{~}\autoref{algorithm:J1_optimalRefInput} is implemented to compute the optimal cable tensions $T_{3opt}$ and $T_{4opt}$. By substituting $T_{3opt}$ and $T_{4opt}$ into \eqref{eq:J1_8}, the upper unstretched cable lengths $L_{01}$ and $L_{02}$ are also calculated. An stiffness optimization example using{~}\autoref{algorithm:J1_optimalRefInput} is shown in \autoref{fig:J1_J_K}. \autoref{fig:J1_J_K}(a) shows that increasing $T_3$ and $T_4$ will rise up $J_K$ (in \autoref{fig:J1_J_K}(b)). The corresponding eigenvalues of stiffness matrix $K$ (in \ref{fig:J1_J_K}(c)) are always positive. Because of the constraints in \eqref{eq:J1_32}, the maximum of $J_K$ corresponds to the optimal values of $T_3$ and $T_4$, i.e., $T_{3opt}$ and $T_{4opt}$, respectively.  

Based on the computed values above (in this section), the proposed control approaches are then utilized to stabilize the system around its reference trajectories. For the independent control, Model Predictive Control (MPC) and Proportional-Integral-Derivative (PID) based control schemes are designed. The former is used to control the lower cable tensions $u_m{({T_3},{T_4})}$, the latter is utilized to control the arm joint torques to minimize joint position errors. {\color{black}The main reason for using MPC-based controllers (see~\autoref{fig:J1_ThreeCtrlStructure}) is that they can handle optimal control problems as well as satisfying constraints.}
\begin{enumerate}[1)]
	\item MPC Control (depicted as the MPC control block diagram in \autoref{fig:J1_ThreeCtrlStructure}(a)): As an optimization based control approach, the MPC cost function with constraints is defined below to minimize the vibration of the mobile platform:
\begin{align}
\label{eq:J1_35}
&{\min} &&\sum\limits_{k = 0}^{{N_p} - 1} {\left( {{e_{x_m}}{{(k)}^T}{Q_p}{e_{x_m}}(k)} \right.} \left. { + e_{u_m}{{(k)}^T}{R_p}e_{u_m}(k)} \right) \nonumber \\
&&&+ {e_{x_m}}{({N_p})^T}{P_p}{e_{x_m}}({N_p})\\ 
&{\rm{s.\;t.}} && \delta {{{x_m}}}(k + 1)_{6 \times 1} = A(k)_{6 \times 6}\delta {x_m}(k)_{6 \times 1} \nonumber \\
&&&+ {B}(k)_{6 \times 2}\delta {u_m}(k)_{2 \times 1} + {B}(k)_{6 \times 2}w_m(k)_{2 \times 1}\nonumber \\
&&&\delta {y_m}(k)_{6 \times 1} = {C}(k)_{6 \times 6}\delta {x_m}(k)_{6 \times 1} \nonumber \\
&&&\delta x_m{(k)_{6 \times 1}} = x_m{(k)_{6 \times 1}} - x_m{(k - 1)_{6 \times 1}}\nonumber \\
&&&\delta u_m{(k)_{2 \times 1}} = u_m{(k)_{2 \times 1}} - u_m{(k - 1)_{2 \times 1}}\nonumber \\
&&&\delta y_m{(k)_{6 \times 1}} = y_m{(k)_{6 \times 1}} - y_m{(k - 1)_{6 \times 1}}\nonumber \\
&&&{e_{x_m}}(k)_{6 \times 1} = x_{mr}{(k)_{6 \times 1}} - x_m{(k)_{6 \times 1}}\nonumber \\
&&&{e_{u_m}}(k)_{2 \times 1} = u_{mr}{(k)_{2 \times 1}} - u_m{(k)_{2 \times 1}}\nonumber \\
&&&\delta {x_{mL}(k)}_{6 \times 1} \le \delta x_m(k)_{6 \times 1} \le \delta {x_{mU}(k)}_{6 \times 1}\nonumber \\
&&&\delta {u_{mL}(k)}_{2 \times 1} \le \delta u_m(k)_{2 \times 1} \le \delta {u_{mU}(k)}_{2 \times 1}\nonumber
\end{align}	
where the state-space model in \eqref{eq:J1_35} represents the independent CDPR. ${e_{x_m}} = x_{mr} - {x_m} = {[{p_{mxr}},{\dot p_{mxr}},{p_{mzr}},{\dot p_{mzr}},{\beta _{mr}},{\dot \beta _{mr}}]^T} - {[{p_{mx}},{\dot p_{mx}},{p_{mz}},{\dot p_{mz}},{\beta _m},{\dot \beta _m}]^T}$ are the errors between the reference trajectory $x_{mr}$ and actual states ${x_m}$. ${e_{u_m}} = u_{mr} - u_m = [{T_{3opt}},{T_{4opt}}]_{2 \times 1}^T - [{T_3},{T_4}]_{2 \times 1}^T$ denotes the errors between the reference inputs $u_{mr}$ and actual inputs $u_m$. 
$\delta x_{mL}$ and $\delta x_{mU}$ denote the lower bound and upper bound of the states $\delta x_m$, respectively. $\delta {u_L}$ and $\delta {u_U}$ represent the lower bound and upper bound of the control inputs $\delta u$, respectively. ${R_p} \in {\mathbb{R}^{2 \times 2}}$ $({R_p} = R_p^T \succ \textbf{0})$, ${Q_p} \in {\mathbb{R}^{6 \times 6}}$ $({Q_p} = Q_p^T \succeq \textbf{0})$, and ${P_p} \in {\mathbb{R}^{6 \times 6}}$ $({P_p} = P_p^T \succeq \textbf{0})$ are input, state, and terminal weighting matrices, respectively. 

{\color{black}
{\quad}As an advanced control technique, MPC utilizes the state-space model in~\eqref{eq:J1_35} to predict the system’s behavior in the future, i.e., by minimizing the cost function (minimizing vibrations and tracking errors in joint space) as well as handling constraints to find the optimal control action. To carry out the MPC in~\eqref{eq:J1_35}, parameters (see~\autoref{table:J1_ThreeControlStructures} in~\autoref{subsec:J1_CaseStudyComparisonControlStructures} for case studies) such as sampling time $T_s$, predictive horizon $N_p$, control horizon $N_c$, input weighting matrix $R_p$, state weighting matrix $Q_p$, terminal weighting matrix $P_p$, lower bound $\delta {x_{mL}}$, upper bound $\delta {x_{mU}}$, lower bound $\delta {u_{mL}}$, and upper bound $\delta {u_{mU}}$ must be considered. Additionally, some key guidelines can help tuning MPC parameters such as set smaller $N_p$ and $N_c \le N_p$.}

\item PID Control (shown in \autoref{fig:J1_ThreeCtrlStructure}(a)): For the mounted arm, the PID controller is designed as
\begin{align}
&{u_a}{({\tau _{a2}},{\tau _{a3}})} =
{K_p}{[{\theta _{a2r}}(t) - {\theta _{a2}}(t),{\theta _{a3r}}(t) - {\theta _{a3}}(t)]^T}\nonumber \\
&\qquad + {K_i}\int_0^t {{{[{\theta _{a2r}}(\mathfrak{t}) - {\theta _{a2}}(\mathfrak{t}),{\theta _{a3r}}(\mathfrak{t}) - {\theta _{a3}}(\mathfrak{t})]}^T}d\mathfrak{t}} \nonumber \\
&\qquad + {K_d}{[{{\dot \theta }_{a2r}}(t) - {{\dot \theta }_{a2}}(t),{{\dot \theta }_{a3r}}(t) - {{\dot \theta }_{a3}}(t)]^T}
\label{eq:J1_36}
\end{align}
where $K_p$, $K_i$, and $K_d$ are the proportional, integral, and derivative terms, respectively. ${\theta _{a2r}}$ and ${\theta _{a3r}}$ denote the reference angles of joint 2 and 3, respectively. ${\theta _{a2}}$ and ${\theta _{a3}}$ represent the actual angles of joint 2 and 3, respectively. ${u_a}{({\tau _{a2}},{\tau _{a3}})_{2 \times 1}}$ denotes the control input to the robot arm. {\color{black}The PID controllers (see~\autoref{fig:J1_ThreeCtrlStructure}) provide a model-free control strategy to compute the error dynamics for control performance comparison.}
\end{enumerate}

\subsection{Integrated Control-I}
\label{subsec:J1_IntegratedCtrlI}
Regarding the integrated control-I, it is also based on hybrid MPC and PID controllers (shown in \autoref{fig:J1_ThreeCtrlStructure}(b)). In this case, a coupled plant (the HCDR block diagram in \autoref{fig:J1_ThreeCtrlStructure}(b)) is adopted, i.e., the nonlinear model \eqref{eq:J1_33}. The corresponding LTV representation of the CDPR is obtained by linearizing \eqref{eq:J1_33} around the reference states $x_{mr}$ and inputs $u_{mr}$, which is used for MPC design (in the form of \eqref{eq:J1_35}) to damp vibrations. Meanwhile, the PID control design for the robot arm is expressed as \eqref{eq:J1_36}.

Additionally, the implementation of integrated control-I is the
same as the independent control (in \autoref{subsec:J1_IndependentCtrl}): when the reference trajectory $r(t)$ is given, the optimal cable tensions $T_{3opt}$ and $T_{4opt}$ and the upper unstretched cable lengths $L_{01}$ and $L_{02}$ are computed. By inputting these values and the outputs of MPC and PID into HCDR to minimize vibration and improve the accuracy of the end-effector.

\subsection{Integrated Control-II (Fully Integrated Control)}
\label{subsec:J1_IntegratedCtrlII}
Integrated control-II is defined as a fully integrated control, which is only based on MPC (shown in \autoref{fig:J1_ThreeCtrlStructure}(c)). In this case, the coupled plant (the HCDR block diagram in \autoref{fig:J1_ThreeCtrlStructure}(c)) is the same as the one shown in \autoref{fig:J1_ThreeCtrlStructure}(b), but the corresponding LTV model is extended to all the states and inputs, i.e., in terms of \eqref{eq:J1_34}.

The integrated control-II is designed to control lower cable tensions and the arm joint torques simultaneously to minimize the vibration of the overall system. Then, the MPC cost function with constraints is redefined as
\begin{align}
\label{eq:J1_37}
&{\min} &&\sum\limits_{k = 0}^{{N_p} - 1} {\left( {{e_x}{{(k)}^T}{Q_p}{e_x}(k)} \right.} \left. { + e_u{{(k)}^T}{R_p}e_u(k)} \right) \nonumber \\
&&&+ {e_x}{({N_p})^T}{P_p}{e_x}({N_p})\\ 
&{\rm{s.\;t.}} && \delta {{{x}}}(k + 1)_{10 \times 1} = A(k)_{10 \times 10}\delta {x}(k)_{10 \times 1} \nonumber \\
&&&+ {B}(k)_{10 \times 4}\delta {u}(k)_{4 \times 1} + {B}(k)_{10 \times 4}w(k)_{4 \times 1}\nonumber \\
&&&\delta {y}(k)_{10 \times 1} = {C}(k)_{10 \times 10}\delta {x}(k)_{10 \times 1} \nonumber \\
&&&\delta x{(k)_{10 \times 1}} = x{(k)_{10 \times 1}} - x{(k - 1)_{10 \times 1}}\nonumber \\
&&&\delta u{(k)_{4 \times 1}} = u{(k)_{4 \times 1}} - u{(k - 1)_{4 \times 1}}\nonumber \\
&&&\delta y{(k)_{10 \times 1}} = y{(k)_{10 \times 1}} - y{(k - 1)_{10 \times 1}}\nonumber \\
&&&{e_x}(k)_{10 \times 1} = x_r{(k)_{10 \times 1}} - x{(k)_{10 \times 1}}\nonumber \\
&&&{e_u}(k)_{4 \times 1} = u_r{(k)_{4 \times 1}} - u{(k)_{4 \times 1}}\nonumber \\
&&&\delta {x_L(k)}_{10 \times 1} \le \delta x(k)_{10 \times 1} \le \delta {x_U(k)}_{10 \times 1}\nonumber \\
&&&\delta {u_L(k)}_{4 \times 1} \le \delta u(k)_{4 \times 1} \le \delta {u_U(k)}_{4 \times 1}\nonumber
% \\
%&&&{R_p} = R_p^T \succ \textbf{0},{R_p} \in {\mathbb{R}^{4 \times 4}}\nonumber \\
%&&&{Q_p} = Q_p^T \succeq \textbf{0},{Q_p} \in {\mathbb{R}^{10 \times 10}}\nonumber \\
%&&&{P_p} = P_p^T \succeq \textbf{0},{P_p} \in {\mathbb{R}^{10 \times 10}} \nonumber
\end{align}
where the errors between the reference trajectory $x_r$ and actual states $x$ are described as {\color{black}${e_x} = {x_r} - {x} = {[{p_{mxr}},{\dot p_{mxr}},{p_{mzr}},{\dot p_{mzr}},{\beta _{mr}},{\dot \beta _{mr}},{\theta _{a2r}},{{\dot \theta }_{a2r}},{\theta _{a3r}},{{\dot \theta }_{a3r}}]_{10 \times 1}^T}\\ - {[{p_{mx}},{\dot p_{mx}},{p_{mz}},{\dot p_{mz}},{\beta _m},{\dot \beta _m},{\theta _{a2}},{{\dot \theta }_{a2}},{\theta _{a3}},{{\dot \theta }_{a3}}]_{10 \times 1}^T}$,} and the errors between the reference inputs $u_r$ and actual inputs $u$ are expressed as ${e_u} = u_r - u = [{T_{3opt}},{T_{4opt}},{\tau _{a2r}},{\tau _{a3r}}]_{4 \times 1}^T - [{T_3},{T_4},{\tau _{a2}},{\tau _{a3}}]_{4 \times 1}^T$. Compared with \eqref{eq:J1_35}, other variables (e.g., $\delta x_L$, $\delta x_U$, $\delta {u_L}$, $\delta {u_U}$, ${R_p}$, ${Q_p}$, and ${P_p}$) are extended to higher dimensions. ${R_p} \in {\mathbb{R}^{4 \times 4}}$ $({R_p} = R_p^T \succ \textbf{0})$, ${Q_p} \in {\mathbb{R}^{10 \times 10}}$ $({Q_p} = Q_p^T \succeq \textbf{0})$, and ${P_p} \in {\mathbb{R}^{10 \times 10}}$ $({P_p} = P_p^T \succeq \textbf{0})$.

Moreover, when the reference trajectory $r(t)$ is given, the nominal variables $T_{3opt}$, $T_{4opt}$, $L_{01}$, $L_{02}$, ${\tau _{a2r}}$, and ${\tau _{a3r}}$ are computed the same as integrated control-I (in \autoref{subsec:J1_IntegratedCtrlI}). Theoretically, when the the same goal and conditions are given, the higher integrated control techniques (e.g., the integrated control-II in \autoref{subsec:J1_IntegratedCtrlII}) are easier lead to better performance, since the control performance indices are more guaranteed by balancing control gains, e.g., using the cost function in \eqref{eq:J1_37}.

{\color{black}In summary, the independent controller provides a model-free control law for the decoupled HCDR; the integrated control-I offers a combination of model-free and model-based control laws for the coupled HCDR. In contrast, the integrated control-II gives a fully model-based control law for the coupled HCDR.} Additionally, the block diagrams (in{~}\autoref{fig:J1_ThreeCtrlStructure}) of \textit{Inverse Dynamics}, \textit{Redundant Resolution {\&} Stiffness Optimization}, \textit{Independent Arm {\&} CDPR} (or \textit{HCDR}), \textit{PID}, and \textit{MPC} mainly correspond to \eqref{eq:J1_32b},{~}\autoref{algorithm:J1_optimalRefInput}, \eqref{eq:J1_33},{~}\eqref{eq:J1_36}, and \eqref{eq:J1_35} (or \eqref{eq:J1_37}), respectively. In the next section, case studies will be proposed to evaluate the control performance.

%==================================================
% \section{Control Performance and Evaluation}
%==================================================
\section{Control Performance and Evaluation}
\label{sec:J1_ControlPerformanceandEva}
\subsection{Case Study--Comparison of Three Control Structures}
\label{subsec:J1_CaseStudyComparisonControlStructures}
To evaluate the performances of the above three control strategies, many case studies can be implemented, e.g., applying different trajectories to the mobile platform and robot arm. However, when the robot arm moves, it generates reaction forces which result in vibration of the mobile platform even when the desired position of the mobile platform is to remain unchanged. This case is quite important in pick-and-place applications. To illustrate the position-holding performance of the CDPR and the position accuracy performance of the end-effector relative to its reference trajectory, reference points $r(t) = {x_r} = {[{p_{mxr}},{\dot p_{mxr}},{p_{mzr}},{\dot p_{mzr}},{\beta _{mr}},{\dot \beta _{mr}},{\theta _{a2r}},{{\dot \theta }_{a2r}},{\theta _{a3r}},{{\dot \theta }_{a3r}}]^T}$ are given as $\underset{t_0}{r_0}=\underset{t_A}{r_A}=[0.05,0,0.1,0,0,0,0,0,0,0]^T $ $\rightarrow$ 
$\underset{t_B}{r_B}=[0.05,0,0.1,0,0,0,0,0,0.3(t_B-t_A),0]^T $ $\rightarrow$
$\underset{t_C}{r_C}=[0.05,0,0.1,0,0,0,0.4(t_C-t_B),0,0.3(t_C-t_B),0]^T $ $\rightarrow$
$\underset{t_{end}}{r_{end}}=[0.05,0,0.1,0,0,0,1.0(t_D-t_C),0,1.0(t_D-t_C),0]^T $, where point-to-point (e.g., $r_B \rightarrow r_C$ from time $t_B$ to  $t_C$) movements are implemented using the 5th order polynomial trajectories, and $t_0=0\;\rm{s}$, $t_A=1\;\rm{s}$,  $t_B=3\;\rm{s}$, $t_C=5\;\rm{s}$, and $t_{end}=6\;\rm{s}$. {\color{black}Since the Cartesian position $(x_e,0,z_e)=p_e$ of the end-effector is expressed in terms of $r(t)$ (using the equations shown in Appendix \ref{appendix:J1_9DOFDynamics}), then vibrations and tracking errors of the end-effector can be evaluated though different controllers in \autoref{sec:J1_VibCtrlDesign}.} The corresponding multi segment curves are generated: from the start point $\rightarrow$ point A $\rightarrow$ point B $\rightarrow$ point C $\rightarrow$ the end point as shown in \autoref{fig:J1_EndEffectorCartTraj}.

Furthermore, the control performance was evaluated using MATLAB 2015a (The MathWorks, Inc.) on a Windows 7 x64 desktop PC (Intel Core i7-4770, 3.4 GHz CPU and 8 GB RAM), and the quadratic programming problems (\eqref{eq:J1_35} and \eqref{eq:J1_37}) in the independent control, integrated control-I, and integrated control-II were solved using FiOrdOs \cite{F.Ullmann2011}. The constraints and tuning parameters are given in \autoref{table:J1_ThreeControlStructures}.

%\vspace{-0.4cm}
\begin{table}[h]
	\renewcommand{\arraystretch}{1.3}
	\caption{Parameters of Three Control Structures}
	\centering
	\label{table:J1_ThreeControlStructures}
	\resizebox{\columnwidth}{!}{
		\begin{tabular}{c c c}
			\hline\hline \\[-3mm]
			\multicolumn{1}{c}{Control structures} & \multicolumn{1}{c}{MPC parameters
} & \multicolumn{1}{c}{\pbox{20cm}{PID parameters}}  \\[1.6ex] \hline
\pbox{20cm}{Independent control \\ or \\ Integrated control-I}	
& \pbox{20cm}{$T_s=0.01\;\rm{s}$ (sampling time); \\$N_p=50$ (predictive horizon); \\$N_c=50$ (control horizon); \\$R_p=0.0001I_{2 \times 2}$ (input weighting matrix); \\
$Q_p=I_{4 \times 4}$ (state weighting matrix); \\
$P_p=I_{4 \times 4}$ (terminal weighting matrix); \\  
$\delta {x_{mL}}=-[\infty,\infty]^T$(lower bound); \\ 
$\delta {x_{mU}}=[\infty,\infty]^T$(upper bound);\\
$\delta {u_{mL}}=-[80,80]^T$(lower bound); \\ 
$\delta {u_{mU}}=[80,80]^T$(upper bound).} & \pbox{20cm}{$K_p=400$;\\ $K_i=100$; \\$K_d=10$.} \\
\pbox{20cm}{Integrated control-II}	
& \pbox{20cm}{$T_s=0.01\;\rm{s}$; $N_p=50$;
$N_c=50$; \\ $R_p=0.0001I_{4 \times 4}$;
$ Q_p=I_{10 \times 10}$; $P_p=I_{10 \times 10}$;\\ 
$\delta {x_L}=-[\infty,\infty,\infty,\infty]^T$;
$\delta {x_U}=[\infty,\infty,\infty,\infty]^T$;\\
$\delta {u_L}=-[80,80,2,2]^T$; 
$\delta {u_U}=[80,80,2,2]^T$.} & \pbox{20cm}{--}\\ [1.4ex]
\hline\hline
\end{tabular}
}
%\vspace{-0.2cm}
\end{table}

Based on the desired end-effector trajectory and tuning parameters of three control structures, the performance of the proposed control systems is shown in \autoref{fig:J1_EndEffectorCartTraj} and \autoref{fig:J1_EndEffectorTrajVSTime}. \autoref{fig:J1_EndEffectorCartTraj} shows the end-effector trajectory in Cartesian coordinates. The aim of three controllers is to follow the desired curved path (dotted line). The independent control based tracking trajectory (dashed line), the integrated control-I based tracking trajectory (dash-dot line), and the integrated control-II based tracking trajectory (solid line) are all commanded from the same start point. It is clear that the independent control cannot follow the desired path well. The main reason is that it doesn\textrm{'}t consider the coupling forces/torques between the mobile platform and the robot arm. This leads to large tracking errors.
%\vspace{0.2cm}
\begin{figure}[t]\centering
	\includegraphics[width=7.2cm]{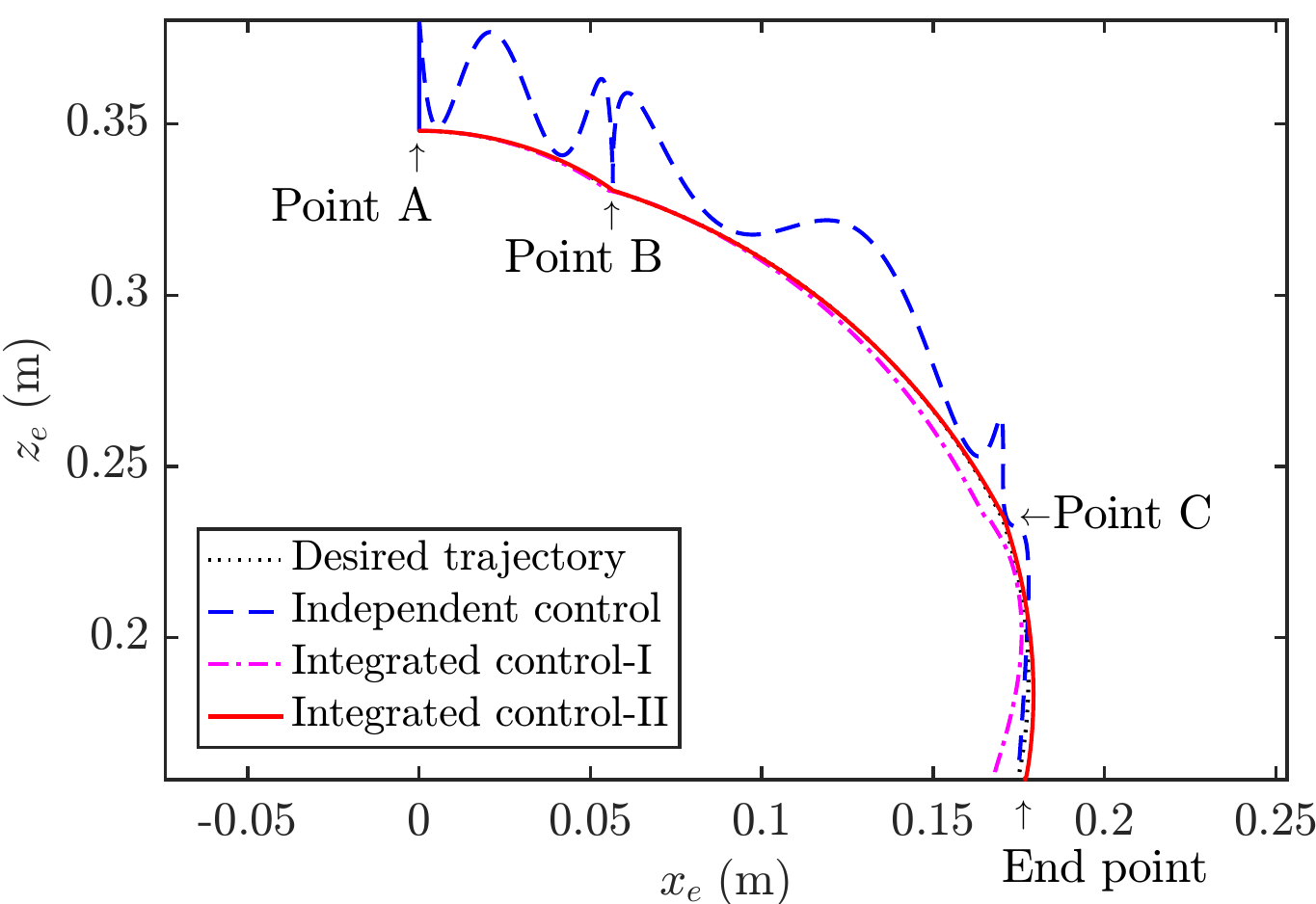}
	\vspace{-0.2cm}
	\caption{End-effector trajectory in Cartesian coordinates.}
	\label{fig:J1_EndEffectorCartTraj}
	\vspace{-0.2cm}
\end{figure}
\vspace{-0.1cm}
\begin{figure}[h]\centering
	\includegraphics[width=8.5cm]{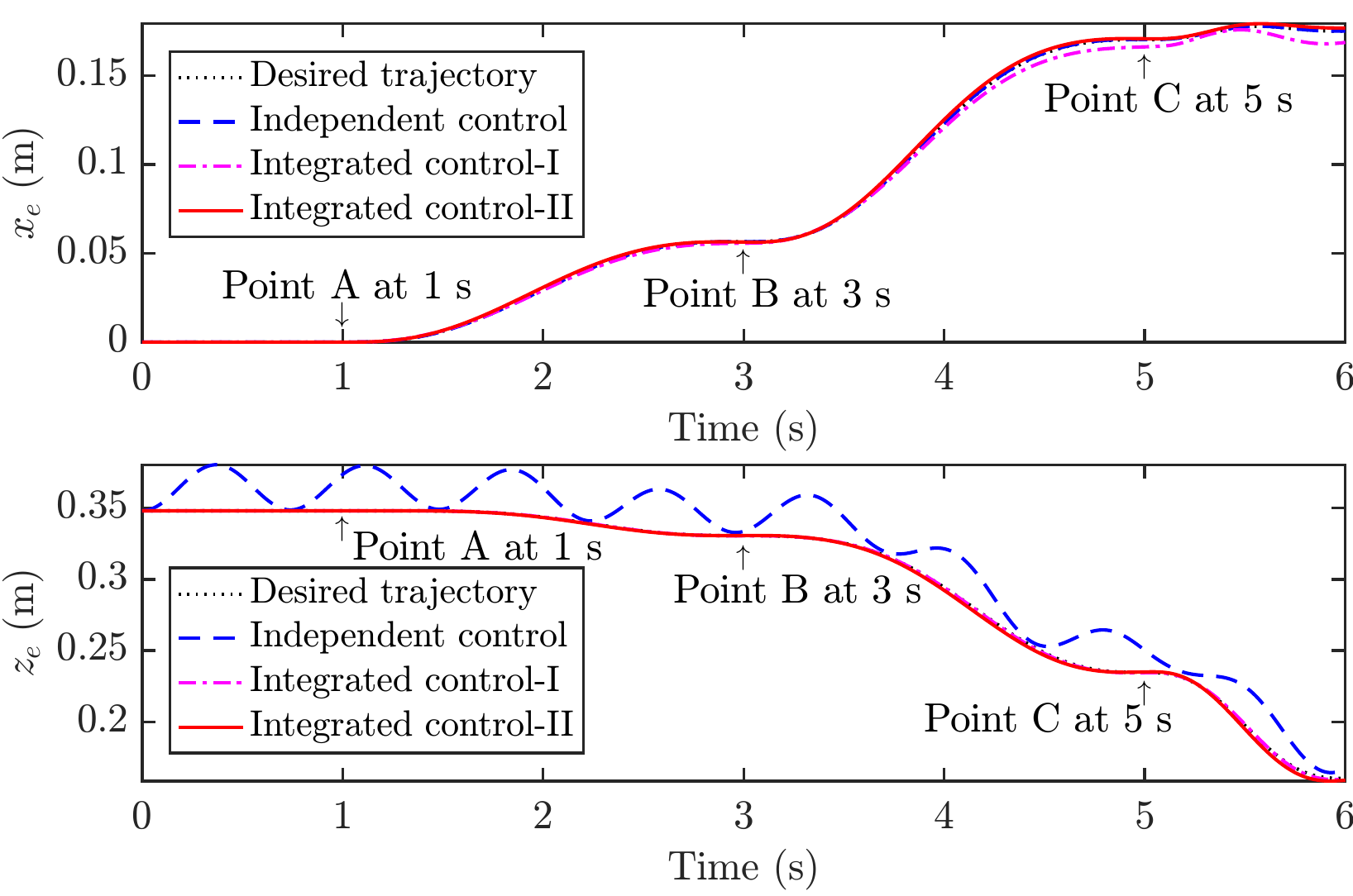}
	\vspace{-0.3cm}
	\caption{End-effector trajectory versus time.}		\label{fig:J1_EndEffectorTrajVSTime}
	\vspace{-0.3cm}
\end{figure}

Additionally, from the start point $\rightarrow$ point A $\rightarrow$ point B, integrated control-I and integrated control-II show good trajectory tracking performance. However, from point B $\rightarrow$ point C $\rightarrow$ end point, tracking errors of former are larger than the later (always has a good tracking performance). Integrated control-II uses an optimized control scheme to handle dynamic coupling hence suppressing vibrations to satisfy \eqref{eq:J1_37}.

\autoref{fig:J1_EndEffectorTrajVSTime} shows the end-effector trajectory versus time. The time response for three control structures has a similar tracking performance, as shown in \autoref{fig:J1_EndEffectorCartTraj}. In \autoref{fig:J1_EndEffectorTrajVSTime}, both the independent control (for the decoupled HCDR) and the integrated control-I (for the coupled HCDR) use the same controllers (MPC and PID) and tuning parameters (shown in \autoref{table:J1_ThreeControlStructures}). The integrated control-I shows good tracking performances in the $X$- and $Z$-directions, and the control inputs can handle the reaction forces between the CDPR and robot arm. However, for the independent control, the control inputs cannot effectively handle the decoupled HCDR (i.e., the ignored reaction force mainly coming from the gravity of the robot arm cannot be overcome) in the $Z$-direction, resulting in vibrations and poor tracking. The reaction force in the $X$-direction is less affected, and the tuned controllers can effectively eliminate the tracking error, so it shows a good tracking performance.

In short, the above results show that integrated control-II (fully integrated control) has better tracking performance than that of integrated control-I. Integrated control-I has better tracking performance than that of independent control.

\subsection{Case Study--RMSE Estimation}
To evaluate the end-effector position tracking errors, the root-mean-square error (RMSE) \cite{F.Y.Wu2015} is used to measure the differences between the desired positions in $X$-$Z$ Cartesian plane and observed values. The RMSE of the end-effector trajectory is described as
%\begin{small}
\begin{align}
RMSE = \sqrt {\frac{1}{{{N_R}}}\sum\limits_{i = 1}^{{N_R}} {({{(p_{exi}-\hat p_{exi})}^2} + {{(p_{ezi}-\hat p_{ezi})}^2})} }
\label{eq:J1_38}
\end{align}
%\end{small}
where $p_{exi}$, $\hat p_{exi}$, $p_{ezi}$, and $\hat p_{ezi}$ denote the desired and observed end-effector positions in $X$- and $Z$-directions, respectively. $N_R$ is the total sampling number.

Using \eqref{eq:J1_38}, the RMSEs of the end-effector trajectory based on the independent control, integrated control-I, and integrated control-II are shown in \autoref{fig:J1_EndEffectorRMSE}. In \autoref{fig:J1_EndEffectorRMSE}, RMSE in the $X$-direction, independent control has the smallest RMSE, i.e., the best trajectory tracking performance. RMSE in the $Z$-direction, integrated control-I has the best trajectory tracking performance. However, RMSE in the 2D-direction represents the overall trajectory tracking performance of the three control structures. It is clear that integrated control-II has the best trajectory tracking performance ($RMSE=0.01889$), and independent control has the worst trajectory tracking performance ($RMSE=0.00164$). Also, this performance matches the result shown in \autoref{fig:J1_EndEffectorCartTraj} and \autoref{fig:J1_EndEffectorTrajVSTime}. Hence, integrated control-II (fully integrated control) has the best overall trajectory tracking performance for the end-effector.

To the best of the authors\textrm{'} knowledge, few studies use MPC, which utilizes a sufficiently accurate dynamic model. In comparison with previous studies, such as PID \cite{Mendez2014}, linear parameter-varying (LPV) \cite{H.Jamshidifar2018}, and sliding mode control (SMC) \cite{R.d.Rijk2018}, the results of this paper offer noticeable improvements in the following aspects: 1) satisfactory results are guaranteed by the optimal control inputs and constraints, and 2) the use of MPC enhances the control performance by using the future steps from the reference trajectories to generate control laws.

{\color{black}Additionally, various simulation environments can be used to validate the control performance above, such as Robot Operating System (ROS) and Matlab. In this paper, we use Matlab because it can quickly realize code generation for experiments (e.g., implemented in a Beckhoff Embedded PC) in the future.}

\begin{figure}[t]\centering
	\includegraphics[width=8.7cm]{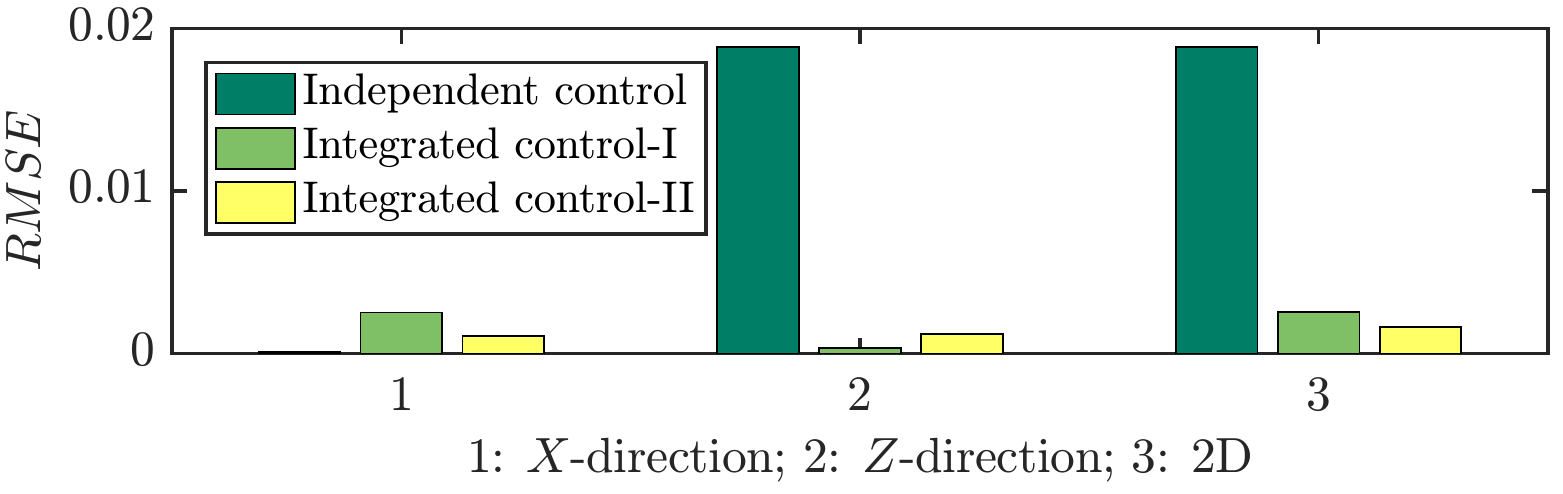}
	%\vspace{-0.4cm}
	\caption{RMSE of the end-effector trajectory.}		\label{fig:J1_EndEffectorRMSE}
	%\vspace{-0.8cm}
\end{figure}

%==================================================
% \section{Conclusions and Future Work}
%==================================================
\section{Conclusions and Future Work}
\label{sec:J1_Conclusions}
This paper presented a generalized HCDR by combining the strengths and benefits of serial and cable-driven-parallel robots. A generalized modeling approach was also proposed for the HCDR, including equations of motion, redundancy resolution, and stiffness optimization. This approach can be extended to other types of hybrid robots.

In addition, three control architectures were developed and analyzed to achieve the goal of reducing vibrations and trajectory tracking errors for the end-effector. Control performances in different aspects, including the position-holding performance of the CDPR and the position accuracy performance of the end-effector were evaluated and discussed. The results showed that the fully integrated control system could reduce the tracking and end-effector vibrations significantly.

In the future, the hardware of HCDR is planned to be designed, and then experiments for the proposed control strategies will be conducted. A common performance indicator that can easily compare the proposed control strategies with other types of controllers will also be developed. {\color{black}\autoref{fig:J1_ThreeCtrlStructure} provided three control architectures (input white noises were also considered), and they were evaluated in simulations. In the future, some challenging problems in real-world applications such as input dead zones~\cite{TYang2019}, parametric uncertainties~\cite{NSun2019}, and unidirectional input constraints~\cite{NSun2019} are planned to be studied as well.}

%==================================================
% \appendices
%==================================================
\appendices
\section{} \label{appendix:J1_9DOFDynamics}
For the specific 9-DOF HCDR, the COM (of the links) positions are computed as ${p_{ac1}} = {p_{a0}} + R_g^m{R_z}({\theta _{a1}}){[x_{ac01},y_{ac01},z_{ac01}]^T}$, ${p_{ac2}} = {p_{a1}} + R_g^m{R_z}({\theta _{a1}}){R_y}({\theta _{a2}}){[x_{ac02},y_{ac02},z_{ac02}]^T}$, and ${p_{ac3}} = {p_{a2}} + R_g^m{R_z}({\theta _{a1}}){R_y}({\theta _{a2}}){R_y}({\theta _{a3}}){[x_{ac03},y_{ac03},z_{ac03}]^T}$ where the joint position vectors are described as ${p_{a0}} = {[{p_{mx}},{p_{my}},{p_{mz}}]^T} + R_g^m{l_m}$, ${p_{a1}} = {p_{a0}} + R_g^m{R_z}({\theta _{a1}}){[x_{a01},y_{a01},z_{a01}]^T} $, ${p_{a2}} = {p_{a1}} + R_g^m{R_z}({\theta _{a1}}){R_y}({\theta _{a2}}){[x_{a02},y_{a02},z_{a02}]^T}$, and ${p_e}={p_{a3}} = {p_{a2}} + R_g^m{R_z}({\theta _{a1}}){R_y}({\theta _{a2}}){R_y}({\theta _{a3}}){[x_{a03},y_{a03},z_{a03}]^T}$. Additionally, the COM linear velocities and angle velocities (of the links) are calculated as ${v_{ac1}} = {{\dot p}_{ac1}}$, ${v_{ac2}} = {{\dot p}_{ac2}}$, ${v_{ac3}} = {{\dot p}_{ac3}}$, ${\omega _{ac1}} = {({R_z}({\theta _{a1}}))^T}{\omega _m} + {[0,0,{{\dot \theta }_{a1}}]^T}$, ${\omega _{ac2}} = {({R_z}({\theta _{a1}}){R_y}({\theta _{a2}}))^T}({\omega _m} + {[0,0,{{\dot \theta }_{a1}}]^T}) + {[0,{{\dot \theta }_{a2}},0]^T}$, and ${\omega _{ac3}} = {({R_z}({\theta _{a1}}){R_y}({\theta _{a2}}){R_y}({\theta _{a3}}))^T}({\omega _m} + {[0,0,{{\dot \theta }_{a1}}]^T}) + {({R_y}({\theta _{a2}}){R_y}({\theta _{a3}}))^T}{[0,{{\dot \theta }_{a2}},0]^T} + {[0,{{\dot \theta }_{a3}},0]^T}$, where the corresponding parameters are shown in \autoref{fig:J1_9dofHCDPR} and \autoref{table:J1_HCDPRParameters}. By substituting these corresponding equations into \eqref{eq:J1_16} and \eqref{eq:J1_17}, respectively, the equations of motion of the 9-DOF HCDR can be derived (in forms of \eqref{eq:J1_22} and \eqref{eq:J1_24}). {\color{black}One can also verify the equations of motion (symbolic formulas) above using commercial software such as MapleSim.}

\section{} \label{appendix:J1_DroneArm_Dynamics}
{\color{black}
\begin{figure}[h]\centering
	%\vspace{-0.5cm}
	\includegraphics[width=8.5cm]{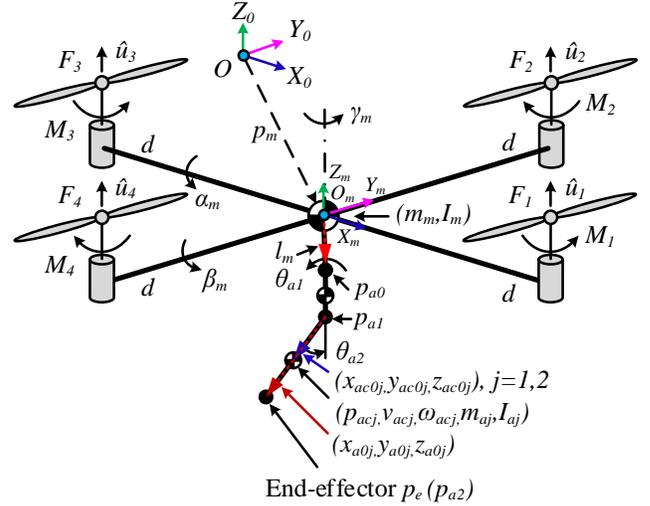}
	%\vspace{-0.2cm}
	\caption{The schematic with frames assignment of a quadrotor and a 2-DOF mounted upside-down robot arm and forces/moments acting on the quadrotor.}\label{fig:J1_appx_quadrotor}
    %\vspace{-1cm}
\end{figure}
To verify the proposed dynamic modeling approach (see~\autoref{sec:J1_GeneralizedSystemModeling}) that can be easily extended to other types of hybrid robots. Here, we take a configuration of Hummingbird Quadrotor~\cite{Mellinger2012} as an example for analysis; meanwhile, a 2-DOF rotational robot arm (the first and second revolute joints rotate around their body-fixed $Z$- and $Y$-axes, respectively) is mounted upside down. The schematic with frames assignment and forces/moments acting on the quadrotor are shown in~\autoref{fig:J1_appx_quadrotor}. We choose $Z-X-Y$ order to obtain the rotation matrix $R_g^m$, i.e., $R_g^m = R_{z}({\gamma _m})R_x({\alpha _m})R_{y}({\beta _m})$, where ${\alpha _m}$, ${\beta _m}$, and ${\gamma _m}$ are Euler angles rotating around the $X$-, $Y$-, and $Z$-axes, respectively. $F_i$ and $M_i$ ($i=1,2,3,4$) represent a force and moment produced by each rotor. To begin with, we can computer the equations of motion of the quadrotor (without considering the robot arm) as follows: ${{\vec F}_1} + {{\vec F}_2} + {{\vec F}_3} + {{\vec F}_4} - {[0,0,{m_m}g]^T} = {m_m}{{\dot v}_m}$ and $R_g^m[r_{1x},r_{1y},r_{1z}]^T \times {{\vec F}_1} + R_g^m[r_{2x},r_{2y},r_{2z}]^T \times {{\vec F}_2} + R_g^m[r_{3x},r_{3y},r_{3z}]^T \times {{\vec F}_3} + R_g^m[r_{4x},r_{4y},r_{4z}]^T \times {{\vec F}_4} + {M_1}{{\hat u}_1} - {M_2}{{\hat u}_2} + {M_3}{{\hat u}_3} - {M_4}{{\hat u}_4} = ({\color{black}R_g^m}{{{I}}_m{\color{black}R_g^m}^T})({\color{black}R_g^m}{{{{\dot \omega }}}_m}) + ({\color{black}R_g^m}{{{\omega }}_m}) \times (({\color{black}R_g^m}{{{I}}_m{\color{black}R_g^m}^T})({\color{black}R_g^m}{{{\omega }}_m}))$, where ${{\vec F}_i} = {F_i}{{\hat u}_i}$ and unit vector ${{\hat u}_i} = R_g^m{[0,0,1]^T}$. The body-fixed positions of four rotors $[r_{1x},r_{1y},r_{1z}]^T$, $[r_{2x},r_{2y},r_{2z}]^T$, $[r_{3x},r_{3y},r_{3z}]^T$, and $[r_{4x},r_{4y},r_{4z}]^T$ are equal to ${[d,0,0]^T}$, ${[0,d,0]^T}$, ${[ - d,0,0]^T}$, and ${[0, - d,0]^T}$, respectively. ${{{v}}_m}$, ${{{\omega}}_m}$, ${{\dot v}_m}$, and ${{\dot \omega}_m}$ are linear velocity, angular velocity, linear acceleration, and angular acceleration, respectively. Other parameters are shown in~\autoref{fig:J1_appx_quadrotor}. Then, the above equations can be expressed in terms of~\eqref{eq:J1_7} ($F_e$ and $M_e$ are not considered), in which 
\begin{align}
&T:=[F_1, F_2, F_3, F_4, M_1, M_2, M_3, M_4]^T, \label{eq:J1_Appx_B_quadrotor_T}\\
&A_m:=\nonumber\\
&{\begin{bmatrix}
{{\hat u}_1} & \cdots & {{\hat u}_4} & \bf{0}& \cdots & \bf{0}\\
{R_g^m\begin{bmatrix} r_{1x} \\ r_{1y} \\ r_{1z} \end{bmatrix}} \times {{\hat u}_1} & \cdots & {R_g^m\begin{bmatrix} r_{4x} \\ r_{4y} \\ r_{4z} \end{bmatrix}} \times {{\hat u}_4} & {{\hat u}_1} & \cdots & -{{\hat u}_4}
\end{bmatrix}} \label{eq:J1_Appx_B_quadrotor_Am}
\end{align}
\begin{align}
&=
\left[\begin{matrix}
{R_g^m}{\begin{bmatrix} 0 \\ 0 \\ 1 \\ \end{bmatrix}} & {R_g^m}{\begin{bmatrix} 0 \\ 0 \\ 1 \\ \end{bmatrix}} &
{R_g^m}{\begin{bmatrix} 0 \\ 0 \\ 1 \\ \end{bmatrix}} & {R_g^m}{\begin{bmatrix} 0 \\ 0 \\ 1 \\ \end{bmatrix}} \\
{R_g^m}{\begin{bmatrix} 0 \\ -d \\ 0 \\ \end{bmatrix}} & {R_g^m}{\begin{bmatrix} d \\ 0 \\ 0 \\ \end{bmatrix}} &
{R_g^m}{\begin{bmatrix} 0 \\ d \\ 0 \\ \end{bmatrix}}&{R_g^m}{\begin{bmatrix} -d \\ 0 \\ 0 \\ \end{bmatrix}} 
\end{matrix}\right.\nonumber\\
&\qquad\quad
\left.\begin{matrix}
\bf{0}& \bf{0}& \bf{0}& \bf{0}\\
{R_g^m}{\begin{bmatrix} 0 \\ 0 \\ 1 \\ \end{bmatrix}} & -{R_g^m}{\begin{bmatrix} 0 \\ 0 \\ 1 \\ \end{bmatrix}} &
{R_g^m}{\begin{bmatrix} 0 \\ 0 \\ 1 \\ \end{bmatrix}} & -{R_g^m}{\begin{bmatrix} 0 \\ 0 \\ 1 \\ \end{bmatrix}}
\end{matrix}\right], \label{eq:J1_Appx_B_quadrotor_Am2}
\end{align}
and $A_m$ represents the structure matrix of the quadrotor. $A_m$~\eqref{eq:J1_Appx_B_quadrotor_Am} can also be extended to other multirotors, such as hexarotor ($i=6$) and octorotor ($i=8$).

Since there is a mapping between $M_i$ and $F_i$: $M_i=k_MF_i/k_F$, where $k_M$ and $k_F$ are constants~\cite{Mellinger2012}. Then, the equations of motion of the quadrotor can be simplified as
\begin{align}
&{\begin{bmatrix}
{{m_m}{{{{\dot v}}}_m}}\\
{{\color{black}R_g^m}{{{I}}_m}{{{{\dot \omega }}}_m} + {\color{black}R_g^m}{{{\omega }}_m} \times ({{{I}}_m}{{{\omega }}_m})}
\end{bmatrix}}

+{\begin{bmatrix}
{{{m_m}{{[0,0,g]}^T}}}\\
\bf{0}
\end{bmatrix}}=\nonumber\\
&\qquad\qquad\quad
\underbrace{\begin{bmatrix}
{R_g^m}\begin{bmatrix}
0 & 0 & 0 & 0 \\
0 & 0 & 0 & 0 \\
1 & 1 & 1 & 1 \end{bmatrix}\\
{R_g^m}\begin{bmatrix}
0 & d & 0 & -d \\
-d & 0 & d & 0 \\
\frac{k_M}{k_F} & -\frac{k_M}{k_F} & \frac{k_M}{k_F} & -\frac{k_M}{k_F}
\end{bmatrix}
\end{bmatrix}}_{=:\widetilde{A}_m}
\underbrace{\begin{bmatrix}
F_1\\
F_2\\
F_3\\
F_4\end{bmatrix}}_{=:\widetilde{T}}
\label{eq:J1_Appx_B_quadrotor_dyn_redu}
\end{align}
where $\widetilde{A}_m$ and $\widetilde{T}$ represent the reduced matrix and vector of ${A}_m$ and $T$, respectively. Clearly, \eqref{eq:J1_Appx_B_quadrotor_dyn_redu} matches the results shown in~\cite{Mellinger2012} (i.e., the combination of (2.1), (2.2), (2.3), and (2.4) in~\cite{Mellinger2012}).

After $\widetilde{A}_m$ is computed using~\eqref{eq:J1_Appx_B_quadrotor_dyn_redu}, the whole body dynamics of the quadrotor with the 2-DOF robot arm (see~\autoref{fig:J1_appx_quadrotor}) can be easily derived from Appendix~\ref{appendix:J1_9DOFDynamics} (choosing the first two joints and links of the robot arm). Finally, the equations of motion can also be arranged in forms of \eqref{eq:J1_22} and \eqref{eq:J1_24}, where ${A}_m$ and $T$ are replaced by $\widetilde{A}_m$ and $\widetilde{T}$, respectively.
}

% ==================================================
% \section*{Acknowledgment}
% ==================================================
\section*{Acknowledgment}
The authors would like to knowledge the financial support of the Natural Sciences and Engineering Research Council of Canada (NSERC).

% \vspace{-0.6cm}
%==================================================
% References, \bibliography
% Note: I use "doi_disable=" to replace "doi=" if there is no "http:// ..."
%==================================================
\bibliographystyle{IEEEtran}
\bibliography{IEEEabrv,Generalized_Flexible_Hybrid_Cable-Driven_Robot}

% Generated by IEEEtran.bst, version: 1.14 (2015/08/26)
\begin{thebibliography}{10}
\providecommand{\url}[1]{#1}
\csname url@samestyle\endcsname
\providecommand{\newblock}{\relax}
\providecommand{\bibinfo}[2]{#2}
\providecommand{\BIBentrySTDinterwordspacing}{\spaceskip=0pt\relax}
\providecommand{\BIBentryALTinterwordstretchfactor}{4}
\providecommand{\BIBentryALTinterwordspacing}{\spaceskip=\fontdimen2\font plus
\BIBentryALTinterwordstretchfactor\fontdimen3\font minus
  \fontdimen4\font\relax}
\providecommand{\BIBforeignlanguage}[2]{{%
\expandafter\ifx\csname l@#1\endcsname\relax
\typeout{** WARNING: IEEEtran.bst: No hyphenation pattern has been}%
\typeout{** loaded for the language `#1'. Using the pattern for}%
\typeout{** the default language instead.}%
\else
\language=\csname l@#1\endcsname
\fi
#2}}
\providecommand{\BIBdecl}{\relax}
\BIBdecl

\bibitem{Wei2015}
H.~Wei, Y.~Qiu, and J.~Yang, ``{An Approach to Evaluate Stability for
  Cable-Based Parallel Camera Robots with Hybrid Tension-Stiffness
  Properties},'' \emph{Int. J. Adv. Robot. Syst.}, vol.~12, no.~12, pp.
  185:1--185:12, 2015.

\bibitem{Oh2005}
S.~{Oh} and S.~K. {Agrawal}, ``{Cable suspended planar robots with redundant
  cables: controllers with positive tensions},'' \emph{IEEE Trans. Robot.},
  vol.~21, no.~3, pp. 457--465, June 2005.

\bibitem{T.Arai1999}
T.~Arai, S.~Matsumura \emph{et~al.}, ``{A proposal for a wire suspended
  manipulator: A kinematic analysis},'' \emph{Robotica}, vol.~17, no.~1, pp.
  3--9, 1999.

\bibitem{H.Osumi2000}
H.~{Osumi}, Y.~{Utsugi}, and M.~{Koshikawa}, ``{Development of a manipulator
  suspended by parallel wire structure},'' in \emph{Proc. IEEE/RSJ Int. Conf.
  Intell. Robots Syst.}, Takamatsu, Japan, Oct. 2000, pp. 498--503.

\bibitem{M.Bamdad2015}
M.~{Bamdad}, F.~{Taheri}, and N.~{Abtahi}, ``{Dynamic analysis of a hybrid
  cable-suspended planar manipulator},'' in \emph{Proc. IEEE Int. Conf. Robot.
  Autom.}, Seattle, Washington, USA, May 2015, pp. 1621--1626.

\bibitem{M.Gouttefarde2017}
M.~Gouttefarde, ``{Static Analysis of Planar 3-DOF Cable-Suspended Parallel
  Robots Carrying a Serial Manipulator},'' in \emph{New Trends Mechanism
  Machine Sci.}, P.~Wenger and P.~Flores, Eds.\hskip 1em plus 0.5em minus
  0.4em\relax Cham: Springer Int. Publishing, 2017, pp. 363--371.

\bibitem{J.S.Albus1989}
J.~S. {Albus}, ``{Cable Arrangement and Lifting Platform for Stabilized Load
  Lifting},'' U.S. Patent 4,883,184, {Nov}. 28, 1989.

\bibitem{J.S.Albus2003}
J.~S. {Albus}, R.~V. {Bostelman}, and A.~S. {Jacoff}, ``{Modular Suspended
  Manipulator},'' U.S. Patent 6,566,834 B1, {May} 20, 2003.

\bibitem{GmbH}
{Spidercam GmbH}. {Spidercam}. [Accessed: July 22, 2019]. [Online]. Available:
  \url{https://www.spidercam.tv}.

\bibitem{SKYCAM_LLC}
{SKYCAM LLC}. {Skycam}. [Accessed: July 22, 2019]. [Online]. Available:
  \url{http://www.skycam.tv}.

\bibitem{P.R.Pagilla2004}
P.~R. {Pagilla} and B.~{Yu}, ``{An experimental study of planar impact of a
  robot manipulator},'' \emph{IEEE/ASME Trans. Mechatronics}, vol.~9, no.~1,
  pp. 123--128, March 2004.

\bibitem{Lau2013}
D.~{Lau}, D.~{Oetomo}, and S.~K. {Halgamuge}, ``{Generalized Modeling of
  Multilink Cable-Driven Manipulators With Arbitrary Routing Using the
  Cable-Routing Matrix},'' \emph{IEEE Trans. Robot.}, vol.~29, no.~5, pp.
  1102--1113, Oct. 2013.

\bibitem{N.Mostashiri2018}
N.~{Mostashiri}, J.~S. {Dhupia} \emph{et~al.}, ``{A Review of Research Aspects
  of Redundantly Actuated Parallel Robotsw for Enabling Further
  Applications},'' \emph{IEEE/ASME Trans. Mechatronics}, vol.~23, no.~3, pp.
  1259--1269, June 2018.

\bibitem{C.Viegas2017}
C.~{Viegas}, M.~{Tavakoli} \emph{et~al.}, ``{SCALA--A Scalable Rail-based
  Multirobot System for Large Space Automation: Design and Development},''
  \emph{IEEE/ASME Trans. Mechatronics}, vol.~22, no.~5, pp. 2208--2217, Oct.
  2017.

\bibitem{Khajepour2015}
A.~{Khajepour} and S.~T. {M{\'e}ndez}, ``{Apparatus for controlling a mobile
  platform},'' U.S. Patent 14,613,450, {Feb.} 4, 2015.

\bibitem{Taghirad2011}
H.~D. {Taghirad} and Y.~B. {Bedoustani}, ``{An Analytic-Iterative Redundancy
  Resolution Scheme for Cable-Driven Redundant Parallel Manipulators},''
  \emph{IEEE Trans. Robot.}, vol.~27, no.~6, pp. 1137--1143, Dec. 2011.

\bibitem{H.Jamshidifar2017}
H.~Jamshidifar, A.~Khajepour \emph{et~al.}, ``Kinematically-constrained
  redundant cable-driven parallel robots: Modeling, redundancy analysis, and
  stiffness optimization,'' \emph{IEEE/ASME Trans. Mechatronics}, vol.~22,
  no.~2, pp. 921--930, April 2017.

\bibitem{Z.Mu2018}
Z.~{Mu}, H.~{Yuan} \emph{et~al.}, ``{A Segmented Geometry Method for Kinematics
  and Configuration Planning of Spatial Hyper-Redundant Manipulators},''
  \emph{{IEEE} Trans. Syst., Man, Cybern., Syst.}, pp. 1--11, 2018.

\bibitem{Otis2009}
M.~J.~D. {Otis}, S.~{Perreault} \emph{et~al.}, ``{Determination and Management
  of Cable Interferences Between Two 6-DOF Foot Platforms in a Cable-Driven
  Locomotion Interface},'' \emph{{IEEE} Trans. Syst., Man, Cybern. {A}},
  vol.~39, no.~3, pp. 528--544, May 2009.

\bibitem{M.Chen2018}
M.~{Chen}, Y.~{Ren}, and J.~{Liu}, ``{Antidisturbance Control for a Suspension
  Cable System of Helicopter Subject to Input Nonlinearities},'' \emph{{IEEE}
  Trans. Syst., Man, Cybern., Syst.}, vol.~48, no.~12, pp. 2292--2304, Dec.
  2018.

\bibitem{Lin2001}
S.~{Lin} and A.~A. {Goldenberg}, ``{Neural-network control of mobile
  manipulators},'' \emph{IEEE Trans. Neural Networks}, vol.~12, no.~5, pp.
  1121--1133, Sep. 2001.

\bibitem{S.A.Marinovic2017}
S.~{Aguilera-Marinovic}, M.~{Torres-Torriti}, and F.~{Auat-Cheein}, ``{General
  Dynamic Model for Skid-Steer Mobile Manipulators With Wheel-Ground
  Interactions},'' \emph{IEEE/ASME Trans. Mechatronics}, vol.~22, no.~1, pp.
  433--444, Feb. 2017.

\bibitem{Mendez2014}
S.~J.~T. {M{\'e}ndez}, ``{Low Mobility Cable Robot with Application to Robotic
  Warehousing},'' Ph.D. dissertation, University of Waterloo, Waterloo, ON,
  Canada, 2014.

\bibitem{H.Jamshidifar2018}
H.~Jamshidifar, S.~Khosravani, B.~Fidan, and A.~Khajepour, ``Vibration
  decoupled modeling and robust control of redundant cable-driven parallel
  robots,'' \emph{IEEE/ASME Trans. Mechatronics}, vol.~23, no.~2, pp. 690--701,
  April 2018.

\bibitem{Tecnalia}
Tecnalia. Parallel cable robotics: Best solution for optimizing operations in
  large spaces. [Accessed: July 22, 2019]. [Online]. Available:
  \url{https://www.tecnalia.com}.

\bibitem{Denavit1955}
J.~Denavit and R.~S. Hartenberg, ``{A kinematic notation for lower-pair
  mechanisms based on matrices},'' \emph{ASME J. Appl. Mechanics}, vol.~22, pp.
  215--221, 1955.

\bibitem{Tardella2016}
N.~Tardella. {Earthbound Robots Today Need to Take Flight}. [Accessed: July 22,
  2019]. [Online]. Available:
  \url{https://spectrum.ieee.org/automaton/robotics/industrial-robots/earthbound-robots-today-need-to-take-flight}.

\bibitem{PRODRONE2016}
{PRODRONE}. {PRODRONE Unveils the World\textrm{'}s First Dual Robot Arm
  Large-Format Drone}. [Accessed: July 22, 2019]. [Online]. Available:
  \url{https://www.prodrone.com/archives/1420/}.

\bibitem{Hamedthesis2018}
H.~Jamshidifar, ``{Integrated Trajectory-Tracking and Vibration Control of
  Kinematically-Constrained Warehousing Cable Robots},'' Ph.D. dissertation,
  University of Waterloo, Waterloo, ON, Canada, 2018.

\bibitem{Heraldkeeper2019}
{MarketWatch}. {Automated Material Handling Equipment Market Poised to touch
  US\$ 50.0 Billion by 2024}. [Accessed: July 22, 2019]. [Online]. Available:
  \url{https://www.marketwatch.com/press-release/}.

\bibitem{MarketEngine2018}
{Market Research Engine}. {Automated Material Handling Equipment Market By
  Product Analysis; By System Type Analysis; By Software \& Services Analysis;
  By Function Analysis; By Industry Analysis and By Regional Analysis--Global
  Forecast by 2018-2024}. [Accessed: July 22, 2019]. [Online]. Available:
  \url{https://www.marketresearchengine.com/automated-material-handling-equipment-market}.

\bibitem{R.Balasubramanian2011}
R.~Balasubramanian. {The Denavit Hartenberg Convention}. [Accessed: July 22,
  2019]. [Online]. Available:
  \url{https://notendur.hi.is/pgg/DenavitHartenberg.pdf}.

\bibitem{Corke2011}
P.~Corke, \emph{Robotics, Vision and Control: Fundamental Algorithms in
  MATLAB}, ser. Springer Tracts in Advanced Robotics.\hskip 1em plus 0.5em
  minus 0.4em\relax Berlin: Springer, Aug. 2011.

\bibitem{Behzadipour2006}
S.~Behzadipour and A.~Khajepour, ``{Stiffness of Cable-based Parallel
  Manipulators With Application to Stability Analysis},'' \emph{ASME J. Mech.
  Des.}, vol. 128, no.~1, pp. 303--310, Jan. 2006.

\bibitem{J.Li2017}
J.~Li, S.~Andrews \emph{et~al.}, ``{Task-based Design of Cable-driven
  Articulated Mechanisms},'' in \emph{Proc. 1st Annual ACM Sym. Comp. Fabri.},
  New York, NY, USA, 2017, pp. 6:1--6:12.

\bibitem{M.Li2011}
M.~{Li}, H.~{Wu}, and H.~{Handroos}, ``{Stiffness-maximum trajectory planning
  of a hybrid kinematic-redundant robot machine},'' in \emph{Proc. 37th An.
  Conf. IEEE Ind. Electron. Soc.}, Melbourne, Australia, Nov. 2011, pp.
  283--288.

\bibitem{Kock1998}
S.~{Kock} and W.~{Schumacher}, ``{A parallel x-y manipulator with actuation
  redundancy for high-speed and active-stiffness applications},'' in
  \emph{Proc. IEEE Int. Conf. Robot. Autom.}, Leuven, Belgium, May 1998, pp.
  2295--2300.

\bibitem{Gosselin1990}
C.~{Gosselin}, ``{Stiffness mapping for parallel manipulators},'' \emph{IEEE
  Trans. Robot. Autom.}, vol.~6, no.~3, pp. 377--382, June 1990.

\bibitem{Rushton2016}
M.~Rushton, ``Vibration control in cable robots using a multi-axis reaction
  system,'' Master's thesis, University of Waterloo, Waterloo, ON, Canada,
  2016.

\bibitem{Rushton2018}
M.~Rushton and A.~Khajepour, ``Transverse vibration control in planar
  cable-driven robotic manipulators,'' in \emph{Cable-Driven Parallel Robots},
  C.~Gosselin, P.~Cardou, T.~Bruckmann, and A.~Pott, Eds.\hskip 1em plus 0.5em
  minus 0.4em\relax Cham: Springer International Publishing, 2018, pp.
  243--253.

\bibitem{F.Ullmann2011}
F.~Ullmann, ``{FiOrdOs: A Matlab Toolbox for C-Code Generation for First Order
  Methods},'' Master's thesis, ETH Zurich, Switzerland, 2011.

\bibitem{F.Y.Wu2015}
F.~Y. {Wu}, S.~{Foong}, and Z.~{Sun}, ``A hybrid field model for enhanced
  magnetic localization and position control,'' \emph{IEEE/ASME Trans.
  Mechatronics}, vol.~20, no.~3, pp. 1278--1287, June 2015.

\bibitem{R.d.Rijk2018}
R.~de~Rijk, M.~Rushton, and A.~Khajepour, ``Out-of-plane vibration control of a
  planar cable-driven parallel robot,'' \emph{IEEE/ASME Trans. Mechatronics},
  vol.~23, no.~4, pp. 1684--1692, Aug. 2018.

\bibitem{TYang2019}
T.~{Yang}, N.~{Sun}, H.~{Chen}, and Y.~{Fang}, ``Neural network-based adaptive
  antiswing control of an underactuated ship-mounted crane with roll motions
  and input dead zones,'' \emph{IEEE Trans. Neural Netw. Learn. Syst.}, pp.
  1--14, 2019.

\bibitem{NSun2019}
N.~{Sun}, D.~{Liang}, Y.~{Wu}, Y.~{Chen}, Y.~{Qin}, and Y.~{Fang}, ``Adaptive
  control for pneumatic artificial muscle systems with parametric uncertainties
  and unidirectional input constraints,'' \emph{IEEE Trans. Ind. Informat.},
  pp. 1--10, 2019.

\bibitem{Mellinger2012}
D.~W. Mellinger, ``{Trajectory Generation and Control for Quadrotors},'' Ph.D.
  dissertation, University of Pennsylvania, Philadelphia, PA, USA, 2012.

\end{thebibliography}

\vfill
\end{document}